\theoremstyle{plain}
\newtheorem{theorem}{Theorem}[section]
\newtheorem{proposition}[theorem]{Proposition}
\theoremstyle{definition}
\theoremstyle{remark}
\definecolor{mygray}{gray}{.9}
\newcommand\red{\textcolor{black}}
\title{Exploiting Layer Normalization Fine-tuning in Visual Transformer Foundation Models for Classification}
\author{
Zhaorui Tan$^{1,2,3\dag}$, 
Tan Pan$^{1,4\dag}$, 
Kaizhu Huang$^{5}$,
Weimiao Yu$^{6}$, \\ 
\textbf{Kai Yao$^{7}$,
Chen Jiang$^{1,4}$,  
Qiufeng Wang$^{2}$, 
Anh Nguyen$^{3}$,
Xin Guo$^{1,4}$,} \\   
\textbf{Yuan Cheng$^{1,4*}$,
Xi Yang$^{2}$}\thanks{Corresponding authors.
$^\dag$Equal contributions.
This research was conducted during an internship at the Shanghai Academy of Artificial Intelligence for Science.}
\\
$^{1}$Shanghai Academy of Artificial Intelligence for Science, $^{2}$Xi'an Jiaotong-Liverpool University
\\$^{3}$University of Liverpool, $^{4}$AI$^3$ Fudan University,
$^{5}$Duke Kunshan University, $^{6}$BII, A$^*$STAR
\\
 $^{7}$ Zhejiang University
{\tt\small  raytan@liverpool.ac.uk, Xi.Yang01@xjtlu.edu.cn} \\
{\tt\small  chengyuan@sais.com.cn}
}
\begin{document}

\maketitle

\begin{abstract}
LayerNorm is pivotal in Vision Transformers (ViTs), yet its fine-tuning dynamics under data scarcity and domain shifts remain underexplored. 
This paper shows that shifts in LayerNorm parameters after fine-tuning (LayerNorm shifts) are indicative of the transitions between source and target domains; its efficacy is contingent upon the degree to which the target training samples accurately represent the target domain, as quantified by our proposed Fine-tuning Shift Ratio ($FSR$).
Building on this, we propose a simple yet effective rescaling mechanism using a scalar $\lambda$ that is negatively correlated to $FSR$ to align learned LayerNorm shifts with those ideal shifts achieved under fully representative data, combined with a cyclic framework that further enhances the LayerNorm fine-tuning.
Extensive experiments across natural and pathological images, in both in-distribution (ID) and out-of-distribution (OOD) settings, and various target training sample regimes validate our framework. 
Notably, OOD tasks tend to yield lower $FSR$ and higher $\lambda$ in comparison to ID cases, especially with scarce data, indicating under-represented target training samples. Moreover, ViTFs fine-tuned on pathological data behave more like ID settings, favoring conservative LayerNorm updates. 
Our findings illuminate the underexplored dynamics of LayerNorm in transfer learning and provide practical strategies for LayerNorm fine-tuning.
\end{abstract}

\section{Introduction}


Vision Transformers (ViTs)~\cite{dosovitskiy2020image} and their variants dominate visual foundation models (ViTFs), incorporating Layer Normalization (LayerNorm) typically before or after attention layers~\cite{radford2021learning,oquab2023dinov2,lu2024visual,wang2024pathology,ding2024multimodal}. Although ViTFs efficiently transfer across numerous downstream tasks and serve as pretrained backbones, full-model fine-tuning from \red{source to target domains} is resource-intensive and often inefficient~\cite{jie2023fact,de2023effectiveness,zhao2023tuning}. To address this, fine-tuning only the LayerNorm parameters (LayerNorm fine-tuning)~\cite{sandler2023parameter} offers an alternative.
Recent studies~\cite{zhao2023tuning,chen2024efficiency,de2023effectiveness,qi2022parameter,giannou2023expressive} show that fine-tuning only LayerNorm parameters offers an efficient approach, yielding competitive or superior results with minimal model weight updates. In particular, \cite{giannou2023expressive} investigates the expressive power of normalization layers, demonstrating that for random ReLU networks, fine-tuning just the normalization layers can reconstruct any target network that is $O\sqrt{width}$ times smaller.
Although LayerNorm fine-tuning is effective, its mechanisms in ViTFs is not well understood.

This paper systematically investigates LayerNorm's role in parameter-efficient fine-tuning, highlighting its behavior across domain shifts and varying data regimes - critical factors for real-world applications. 
We aim to address the subsequent inquiries:
(i) How does the domain shift between the source and target domains impact LayerNorm statistics and parameters? 
(ii) How do LayerNorm parameter shifts after fine-tuning (LayerNorm shifts) affect performance across various data conditions?
(iii) Is it possible to further improve the fine-tuning of LayerNorm in practical applications?

\begin{wrapfigure}{r}{0.45\linewidth}
  \centering
  \includegraphics[width=0.95\linewidth]{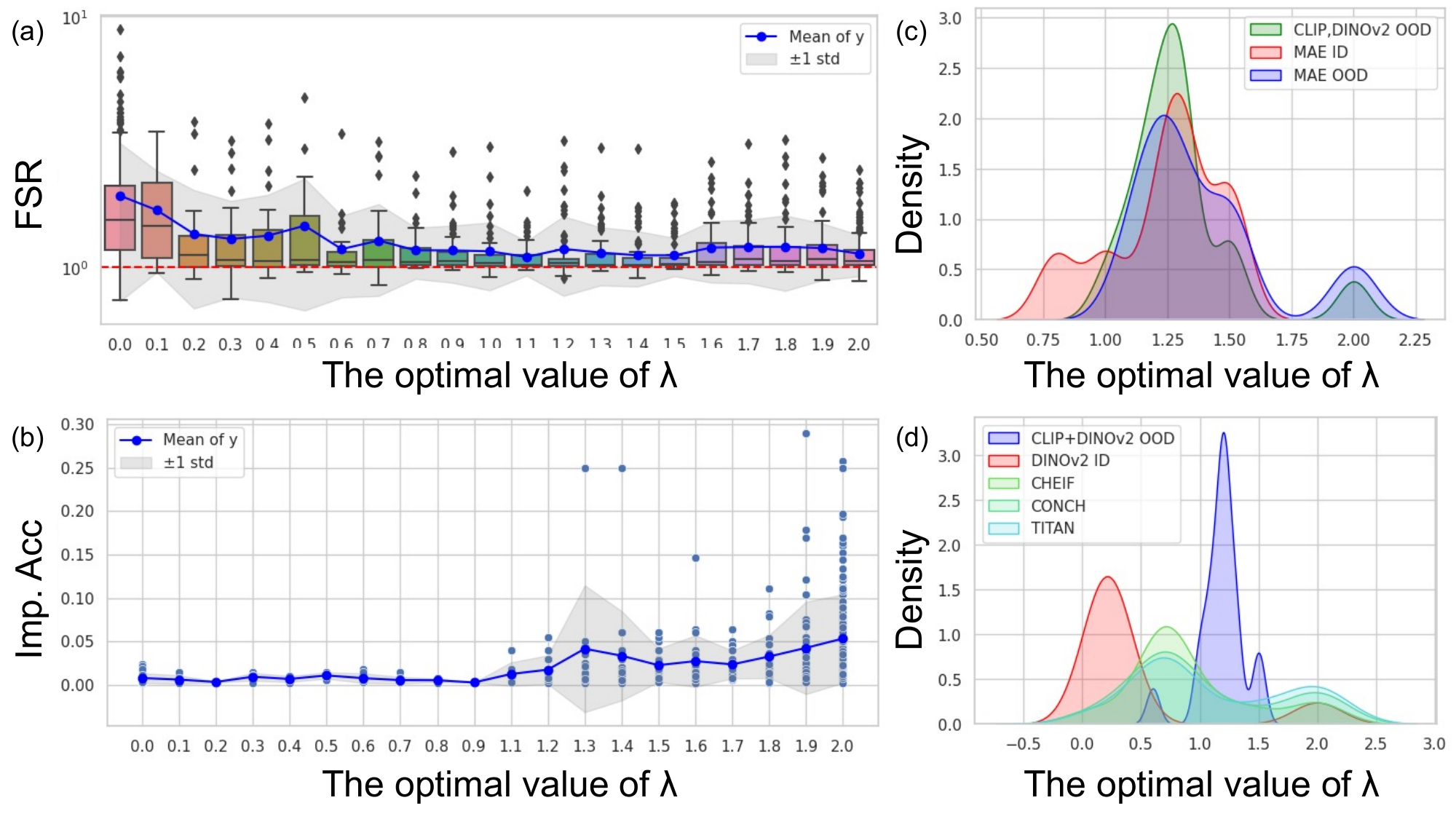}
  \vspace{-0.2cm}
    \caption{ Toy example results: (a) $\lambda$ scales proportionally with FSR; (b) Accuracy improvements versus optimal $\lambda$; 
    (c, d) Kernel density estimation visualization of the optimal (best) values of  $\lambda$  (c) of natural image ViTFs (from \cref{tab:ID_OOD_res}) across different settings for the DomainNet dataset and
    (d) of pathological ViTFs (CHEIF, CONCH, and TITAN) compared with natural images ViTFs (CLIP and DINOv2) (from \cref{tab:ID_OOD_res,tab:natural_image_res,fig:path_res_details}).
    }
  \label{fig:banner}
  \vspace{-0.5cm}
\end{wrapfigure}
To thoroughly study the above issues, we discuss diverse fine-tuning scenarios: in-distribution (ID) versus out-of-distribution (OOD) fine-tuning; one-shot, few-shot, and standard fine-tuning regimes; and natural versus pathological image domains.
Our analysis reveals key findings for the discussed questions:

\textbf{(I)} LayerNorm shifts effectively encode distributional divergence between source and target data, regardless of ID or OOD tuning settings (\cref{sec:Tuning LayerNorm layers in the model collectively capture the data distribution shifts}).

\textbf{(II)} 
LayerNorm fine-tuning performance relies on the target training data's representation of the target domain under source domain influence.
To quantify this, we introduce the Fine-tuning Shift Ratio ($FSR$) as the ratio of the labeled target training set shift to the full target domain shift, both measured against the source domain. The ratio of learnt to ideal LayerNorm shifts is proportional to $FSR$. When $FSR=1$, learnt LayerNorm shifts approximate ideal LayerNorm shifts (\cref{sec:Tuning LayerNorm layers in the model collectively capture the data distribution shifts}).

\textbf{(III)} Rescaling partial LayerNorm shifts via a non-negative single-valued scalar $\lambda$ (\cref{fig:banner}~(b)) can mirror the scenario where $FSR=1$ and then improve the performance. 
The optimal value of $\lambda$ is tightly linked to $FSR$ (\cref{fig:banner}~(a)), emphasizing $\lambda$'s role as an indicator for the actual $FSR$ without the source domain.
This insight is supported by a series of controlled toy experiments (\cref{sec:Rescaling LayerNorm shifts leads to improvements}). 
Building on these findings, we propose a robust tuning framework that alternates between training the predictor and fine-tuning LayerNorm layers, enabling them to better capture data shifts and consistently improve performance across different training sample regimes (\cref{sec:method}).

During the experiments, we have two further empirical findings in
\cref{{sec:exp}}:
\textbf{(IV)} 
Optimal $\lambda$ values inferred by $FSR$ tend to be less than $1$ in ID but greater than $1$ in OOD settings, particularly under the same number of limited target training samples. This suggests that the $FSR$ in OOD scenarios is generally lower, indicating less representativeness of the target training data relative to the full target distribution  (\cref{fig:banner}~(c)). 
\textbf{(V)} Most ViTFs fine-tuned on pathological image datasets exhibit behavior aligned with ID settings and tend to prefer $\lambda \le 1$. This reveals the critical differences between pathological and natural image tasks, whereas pathological images may often exhibit less variation than natural images (\cref{fig:banner}~(d)). 
See related work in \cref{sec:Related work}. The codebase, data splits, and resources will be publicly released upon acceptance to enable reproducibility and further research.

\textbf{Preliminaries.}
We recall the definition of standard LayerNorm for readers as follows:
\begin{equation}
\label{eq:ln}
    {ln}(*)=\left ({(z-\mathrm{E}[z])}/{ \sqrt{\operatorname{Var}[z]+\epsilon} }\right ) \odot \gamma+\beta.
\end{equation}
where $z$ is the input variable, $\gamma,\beta$ are the learnable scale and bias,
and $\epsilon$ is a small value to prevent numerical problems.
Specifically, after the LayerNorm operation, $ln(z) \sim \mathbb{P}_{\beta, \gamma^2}$, where $\beta$ is the variance mean and $\gamma^2$ is the variance; $\mathbb{P}_{\beta, \gamma^2}$ is a distribution whose form is not explicit. 

Standard ViTFs with $n$ attention blocks use standard LayerNorm; The $n^{th}$ block defines attention and other layers, where $ln^{pre}$ and $ln^{post}$ specify pre- and post-attention LayerNorm, as shown in {$\mathcal{M} = \{ enc, [m'_1, ln^{pre}_{1}, m_1, ln^{post}_{1}],  [m'_2, ln^{pre}_{2}, m_2, ln^{post}_{2}],\dots,[m'_n, ln^{pre}_{n}, m_n, ln^{post}_{n}], \}$}. 
Superscripts $S$, $T$, and $T*$ indicate variables from the source, target, and target training domains. Source data and labels are $(X^{S}, Y^{S})$, target training data and labels are $(X^{T}, Y^{T})$, and all target data are $(X^{T*}, Y^{T*})$. The original pretrained, tuned, and ideal tuned models correspond to $\mathcal{M}^{S}$, $\mathcal{M}^{T}$, and $\mathcal{M}^{T*}$, respectively.
Normally, additional layers $\mathcal{C}$ as the predictor are plugged at the end of $\mathcal{M}$ for specific tasks such as classification. To avoid unexpected effects from $\mathcal{C}$, this paper mainly uses lossless linear layers for $\mathcal{C}$ and conducts classification tasks. 
For simplification, we additionally denote all the LayerNorm layers $\mathcal{LN}$ in the $\mathcal{M}^{S}$ $\mathcal{M}^{T}$ and  $\mathcal{M}^{T*}$ as $\mathcal{LN^{S}}$ $\mathcal{LN}^{T}$ and  $\mathcal{LN^{T*}}$.

\textbf{Assumptions.} 1. For any input $X^{S}$ from self-supervised ViTFs, there exists a corresponding $Y^{S}$ that aligns with the supervised task  $(X^{T}, Y^{T})$;  2. The ViTFs can extract useful (though potentially suboptimal) features from  $X$ without fine-tuning; 3. A predictor $\mathcal{C}$  exists that can map these features to $Y$.
These naturally hold for most large-scale pre-trained vision transformers.

\section{Relationship between LayerNorm shifts and domain shifts}
\label{sec:Tuning LayerNorm layers in the model collectively capture the data distribution shifts}

\textbf{LayerNorm shifts encode domain shifts.}
This part shows that all LayerNorm layers in the model collectively capture the data distributions, while the other layers, except LayerNorm layers, are fixed. 
\begin{proposition}
\label{prop:shift_compare}
Under the assumption that the ${Shift}_{ln}, {Shift}_{data}$ functions are proportionally comparable across domains. This implies that the magnitude of change in LayerNorm parameters tracks the magnitude of data distribution shift. While this may not be strictly true in all cases, it serves as a reasonable approximation for theoretical analysis and is empirically supported.
The shift between the original to fine-tuned LayerNorm layers is proportional to the shift between $X^{S}$, $X^{T}$.
\begin{equation}
\label{eq:shift_compare}
     |{Shift}_{ln}(\mathcal{LN}^{S}, \mathcal{LN}^{T})| \propto |{Shift}_{data}( (X^{S}, Y^{S} ) , (X^{T}, Y^{T}))|,
\end{equation}
where ${Shift}(\cdot)$ is an arbitrary quantification that measures the shifts between two variables. 
\end{proposition}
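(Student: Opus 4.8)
The plan is to exploit the structural fact that, throughout fine-tuning, every sublayer of $\mathcal{M}$ except the LayerNorm layers $\mathcal{LN}$ is frozen, so that the entire burden of moving from the source domain to the target domain is carried by the parameters $\{\beta_i,\gamma_i\}$ of $\mathcal{LN}$. Concretely, I would first write the fine-tuned parameters as the minimizer $\mathcal{LN}^{T}=\arg\min \mathbb{E}_{(X^{T},Y^{T})}[\,\ell(\mathcal{C}\circ\mathcal{M}_{\mathcal{LN}}(X^{T}),Y^{T})\,]$ taken over LayerNorm parameters only, and $\mathcal{LN}^{S}$ as the analogous minimizer for the source objective — which, by the foundation-model pretraining together with Assumptions 1--3 (a matching $Y^{S}$ exists, the frozen backbone already extracts usable features, and $\mathcal{C}$ can read them out), is a near-stationary point of the source task. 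This realizes $\mathcal{LN}^{S}$ and $\mathcal{LN}^{T}$ as two functions of the respective data laws, so that $\mathrm{Shift}_{ln}(\mathcal{LN}^{S},\mathcal{LN}^{T})$ is itself a functional of the pair $((X^{S},Y^{S}),(X^{T},Y^{T}))$ that vanishes when the two laws coincide.

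Next I would propagate the data shift through the blocks. Recalling the preliminaries, after the $i$-th LayerNorm the activation is distributed as $\mathbb{P}_{\beta_i,\gamma_i^{2}}$, and since the maps $m'_i,m_i$ are fixed, the statistics $\mathrm{E}[z_i],\operatorname{Var}[z_i]$ entering \cref{eq:ln} at block $i$ are deterministic push-forwards of the input law through the frozen sublayers and the earlier LayerNorms. Hence the correction $(\beta_i^{T}-\beta_i^{S},\ \gamma_i^{T}-\gamma_i^{S})$ required so that $\mathbb{P}_{\beta_i^{T},(\gamma_i^{T})^{2}}$ again matches the post-normalization law the downstream frozen layers were pretrained to consume is controlled by the discrepancy between source- and target-induced statistics at layer $i$; telescoping these corrections over $i=1,\dots,n$ (and over the pre/post pair) expresses $\mathrm{Shift}_{ln}$ as an accumulation of per-layer statistic discrepancies, each in turn a push-forward of $\mathrm{Shift}_{data}((X^{S},Y^{S}),(X^{T},Y^{T}))$. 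To convert this chain of dependencies into the proportionality of \cref{eq:shift_compare}, I would linearize: parametrizing the interpolation from the source to the target law by a scalar $D$ and differentiating the stationarity condition $\nabla_{\theta}\mathcal{L}_{D}(\theta^{\star}(D))=0$ (with $\theta=\mathcal{LN}$) gives $\tfrac{d\theta^{\star}}{dD}=-(\nabla^{2}_{\theta}\mathcal{L})^{-1}\nabla_{\theta}\partial_{D}\mathcal{L}$, so that to leading order $\|\theta^{T}-\theta^{S}\|$ grows linearly with the traversed distributional distance; composing with the monotone, locally affine dependence of the per-layer statistic gaps on $\mathrm{Shift}_{data}$ from the previous step yields $|\mathrm{Shift}_{ln}|\asymp|\mathrm{Shift}_{data}|$, and invoking the proportional-comparability hypothesis stated in the proposition promotes this leading-order relation to the claimed $\propto$.

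The step I expect to be the main obstacle is exactly this last upgrade: the implicit-function argument is only valid in a local, small-shift regime where $\nabla^{2}_{\theta}\mathcal{L}$ is invertible and $\theta^{\star}(D)$ is single-valued, whereas globally the map from a data law to an optimal set of LayerNorm parameters is neither unique nor linear, and the quantifier $\mathrm{Shift}(\cdot)$ is deliberately left unspecified. Consequently the strongest claim one can make rigorously is a monotone, locally affine relationship between the two magnitudes, which is why the proposition is phrased with the explicit proportional-comparability assumption and why we buttress it with the empirical evidence of \cref{fig:banner} and the controlled experiments of \cref{sec:Rescaling LayerNorm shifts leads to improvements} rather than a closed-form derivation.
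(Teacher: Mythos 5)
Your argument reaches the conclusion by a genuinely different route from the paper's. The paper's proof (see \cref{app:math_details}) is a loss-accounting identity: it writes $\Delta L = L(\mathcal{LN}^{S}, X^{S}, Y^{S}) - L(\mathcal{LN}^{T}, X^{T}, Y^{T})$, splits it into a ``domain gap loss'' term $L(\mathcal{LN}^{S}, X^{S}, Y^{S}) - L(\mathcal{LN}^{S}, X^{T}, Y^{T})$ plus a ``parameter adaptation gain'' term $L(\mathcal{LN}^{S}, X^{T}, Y^{T}) - L(\mathcal{LN}^{T}, X^{T}, Y^{T})$, assumes both models are well optimized so that both endpoint losses (and hence $\Delta L$) are approximately zero, and concludes that the adaptation gain matches the domain gap; the proportional-comparability assumption then converts these two loss differences into ${Shift}_{ln}$ and ${Shift}_{data}$ respectively. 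You instead work at the level of the parameters themselves: realizing $\mathcal{LN}^{S}, \mathcal{LN}^{T}$ as minimizers over LayerNorm parameters only, pushing the data shift forward through the frozen sublayers to obtain per-layer statistic discrepancies, and differentiating the stationarity condition along an interpolation of the data law to get $\tfrac{d\theta^{\star}}{dD}=-(\nabla^{2}_{\theta}\mathcal{L})^{-1}\nabla_{\theta}\partial_{D}\mathcal{L}$ and hence local linearity of the parameter displacement in the traversed distributional distance. What your route buys is an explicit mechanism that explains \emph{why} the two magnitudes should track each other, rather than an assertion that two loss differences are comparable; what the paper's route buys is brevity and freedom from the regularity conditions (Hessian invertibility, single-valuedness and smoothness of the minimizer) that your implicit-function step requires. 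Both arguments ultimately discharge the same gap onto the stated proportional-comparability assumption, and you are right to identify the local-versus-global upgrade as the step that cannot be made rigorous; the paper's version has the same weakness, only less visibly, since equating a loss gap with a distributional shift measure is itself an unproven calibration step.
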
 

\begin{proof}
\cref{eq:shift_compare} follows by linking $\mathcal{LN}^{S}$ and $\mathcal{LN}^{T}$ through loss function definitions and assumptions.
Details are in \cref{app:math_details}.
\end{proof}

\red{
The overall data distribution shift between the source and target domains can be decomposed into input and label components:
$|{Shift}_{data}((X^{S}, Y^{S}), (X^{T}, Y^{T}))| \Rightarrow |{Shift}_{data}(X^{S}, X^{T})| + |{Shift}_{data}(Y^{S}, Y^{T})|.$
Here, $|{Shift}_{data}(Y^{S}, Y^{T})|$ refers to the label shift, which is typically negligible in many real-world scenarios, as label distributions tend to be relatively stable across domains for classification.  
Therefore, we treat it as a constant in our analysis and focus primarily on the  $|{Shift}_{data}(X^{S}, X^{T})|$, which is more likely to vary significantly and affect model adaptation.
Thus, ${Shift}_{data}( (X^{S}, Y^{S} ) , (X^{T}, Y^{T})) \Rightarrow {Shift}_{data}( X^{S} , X^{T})$.} 

While other types of layers beyond LayerNorm can also capture distributional shifts, the representations they learn are often entangled with other factors such as semantic content. In contrast, LayerNorm primarily captures distributional variations in a more disentangled manner, focusing on normalization statistics without encoding additional task-specific knowledge. This property makes LayerNorm particularly suitable for isolating and adapting to domain shifts.

For ${Shift}_{ln}\left(\mathcal{LN^T}, \mathcal{LN^S}\right)$, 
it should reflect the global shifts of all LayerNorm layers: ${Shift}_{ln}(\mathcal{LN}^T, \mathcal{LN}^S):= \| \mathcal{LN}^T - \mathcal{LN}^S \|_{\mathcal{G}}$. Then, we can obtain:
\begin{equation}
\begin{split}
\label{eq:ln_shift}
    \| \mathcal{LN}^T - \mathcal{LN}^S \|_{\mathcal{G}} 
    := \sum\nolimits_{i=1}^n \left( \left \| \gamma_i^T - \gamma_i^S \right \|_2 + \left \| \beta_i^T - \beta_i^S \right \|_2 \right). 
\end{split}
\end{equation}
Meanwhile, 
${Shift}(X^{S}, X^{T})$ can be various forms. Here, we adopt the Wasserstein distance to quantify ${Shift}(X^S, X^T)$ as it captures the overall geometric discrepancy between feature distributions rather than merely comparing their mean or variance:
\begin{equation}
    {Shift}_{data}( X^{S} , X^{T}) := Wasserstein(X^{S}; X^{T}).
\end{equation}
This is particularly appropriate given that LayerNorm layers are sensitive to both mean and variance statistics, while the Wasserstein metric aligns naturally with its normalization objectives.

\textbf{Learned LayerNorm shifts versus ideal LayerNorm shifts.}
We now study the gap between $\mathcal{M}^{S}$, $\mathcal{M}^{T}$, and $\mathcal{M}^{T*}$.
As shown in \cref{eq:shift_compare}, which links the shifts between LayerNorm and data distributions, if the distribution shifts between $X^{S}$, $X^{T}$, and $X^{T*}$ are characterized, the corresponding LayerNorm shifts can also be clarified.
Here, we define $X^T$ as the finite set of target-domain training samples used for fine-tuning, while $X^{T*}$ denotes the full target-domain data distribution (e.g., all available test samples).
Since $X^T$ is typically a limited subset of $X^{T*}$, it may not fully capture the statistical properties of the full target domain. Thus, the distribution shift $shift_{data}(X^S, X^T)$ can differ from $shift_{data}(X^S, X^{T*})$.
Consider the $shift_{data}(X^{S}, X^{T})$ and $shift_{data}(X^{S}, X^{T*})$, the Fine-tuning Shift Ratio ($FSR$) is defined as 
\begin{equation}
    FSR:={Shift_{data}(X^{S}, X^{T})}/({Shift_{data}(X^{S}, X^{T*}) + \epsilon}),
\end{equation}
where the $\epsilon$ is a small value to avoid numeric issues. Note that the shift is calculated in the raw space, and $X^{T}$ and $X^{T*}$ are from the same distribution, but. 
Following \cref{eq:shift_compare}:
\begin{equation}
\label{eq:shift_compare_2}
     \frac{{Shift}_{ln}(\mathcal{LN}^T, \mathcal{LN}^S)}{{Shift}_{ln}(\mathcal{LN}^{T*}, \mathcal{LN}^S)}  \propto FSR \Rightarrow  \frac{{Shift}_{ln}(\mathcal{LN}^T, \mathcal{LN}^S)}{{Shift}_{ln}(\mathcal{LN}^{T*}, \mathcal{LN}^S)}  = f(FSR),
\end{equation}
where $f(\cdot)$ is a monotonic mapping function that captures the proportional relationship between $FSR$ and the distributional shifts. 
To prevent numeric issues, a small value like $1e-8$ will be added to the denominator of all subsequent fraction terms, which are omitted.
From \cref{eq:shift_compare_2}, LayerNorm fine-tuning performance is influenced by the relative shift between target training samples and the source, rather than the absolute shift, compared to the full target domain and the source. In other words, \textit{the effectiveness of LayerNorm fine-tuning does not inherently depend on whether the target data is in- (ID) or out-of-distribution (OOD) relative to the source.} Rather, it depends on how well the target training data represents the overall target domain's distribution, especially regarding the source-target divergence.See \cref{sec:ID LayerNorm fine-tuning vs OOD LayerNorm fine-tuning} for empirical analysis.


\section{Rescaling LayerNorm shifts to mimic FRS=1 scenario}
\label{sec:Rescaling LayerNorm shifts leads to improvements}

Diving into \cref{eq:shift_compare_2}, we discuss following scenarios:

\textbf{Scenario one: $FSR=1$.} $FSR=1$ indicates the scenario where $X^{T}$ is statistical sufficient for representing $X^{T*}$. In such ideal cases, the fine-tuned LayerNorm layers can generalize for all $X^{T*}$.

\textbf{Scenario two: $FSR<1$ or $FSR > 1$.} While $FSR$ dose not equals to one, the $X^{T}$ is no longer statistical sufficient for representing $X^{T*}$. Specifically, while $FSR> 1$, we have:
\begin{equation}
\label{eq:why_scale}
\begin{split}
    \frac{{Shift}_{ln}(\mathcal{LN}^T, \mathcal{LN}^S)}{{Shift}_{ln}(\mathcal{LN}^{T*}, \mathcal{LN}^S)} > f(FSR=1)
    \Rightarrow \frac{\lambda \cdot {Shift}_{ln}(\mathcal{LN}^T, \mathcal{LN}^S)}{{Shift}_{ln}(\mathcal{LN}^{T*}, \mathcal{LN}^S)} = f(FSR=1),
\end{split}
\end{equation}
where $\lambda \ge 0$ and similar scenario for $FSR < 1$. Empirically, we notice that normally the $FSR \geq 1$ (see \cref{sec:Toy examples}). 
This indicates that the tuned LayerNorm layers can be further improved with rescaling. 
To investigate the rescale approach, we now consider only the $i^{th}$ layer of LayerNorm. Following \cref{eq:ln_shift}, the left-hand sized of \cref{eq:why_scale} can simplified as:
\begin{equation}
\begin{split}
    \frac{{Shift}_{ln}(\mathcal{LN}_i^T, \mathcal{LN}_i^S)}{{Shift}_{ln}(\mathcal{LN}_i^{T*}, \mathcal{LN}_i^S)} = \frac{ \left \| \gamma_i^T - \gamma_i^S \right \|_2 + \left \| \beta_i^T - \beta_i^S \right \|_2 }{ \left \| \gamma_i^T - \gamma_i^S \right \|_2 + \left \| \beta_i^{T} - \beta_i^S \right \|_2}.
\end{split}
\end{equation}
We observe that the convergence behaviors of the normalization parameters $\beta$ (mean) and $\gamma$ (variance) differ with respect to the number of target training samples $n$. 
Due to the Central Limit Theorem, the sample mean (i.e., proxy for $\beta$) converges rapidly as $n$ increases, with estimation error decreasing at a rate of $\mathcal{O}(1/\sqrt{n})$. 
In contrast, variance estimation (proxy for $\gamma$) relies on the chi-squared distribution and typically converges more slowly, requiring significantly more samples to achieve comparable confidence. 
This asymmetry suggests that fewer samples are needed to adapt $\beta$ reliably than $\gamma$, which is consistent with our empirical findings in Appendix~\cref{fig:natural_image_bias_weight}. 
For detailed derivations and sample complexity analysis of mean and variance estimation, please refer to \cref{app:math_details}.
As analyzed above, the $\mu$ convergences are quicker than $\sigma$ when given an increasing number of samples, implying similar phenomena between $\beta$ and $\gamma$. 
Moreover, the $\beta$ implies direction information while $\sigma$ does not. While $\beta$ is not converged, its direction may also not converge, and rescaling it would be risky.   
Therefore, the rescale should be applied only to the $\gamma^{T}$. Finally, the $\lambda$ is applied to all $\beta^{T}$ which achieves:
\begin{equation}
    \frac{\lambda \cdot {Shift}_{ln}(\mathcal{LN}^T, \mathcal{LN}^S)}{{Shift}_{ln}(\mathcal{LN}^{T*}, \mathcal{LN}^S)} : = \frac{ \sum_{i}^{n} \left(  \left \|  \lambda \cdot \gamma_i^T - \gamma_i^S \right \|_2 + \left \| \beta_i^T - \beta_i^S \right \|_2 \right)}{ \sum_{i}^{n} \left(\left \| \gamma_i^{T*} - \gamma_i^S \right \|_2 + \left \| \beta_i^{T*} - \beta_i^S \right \|_2 \right)} = f(FSR=1).
\end{equation}

\subsection{Toy examples: the relationship between $\lambda$, $FSR$ and performance}
\label{sec:Toy examples}

\begin{figure}[t]
  \centering
  \includegraphics[width=0.95\linewidth]{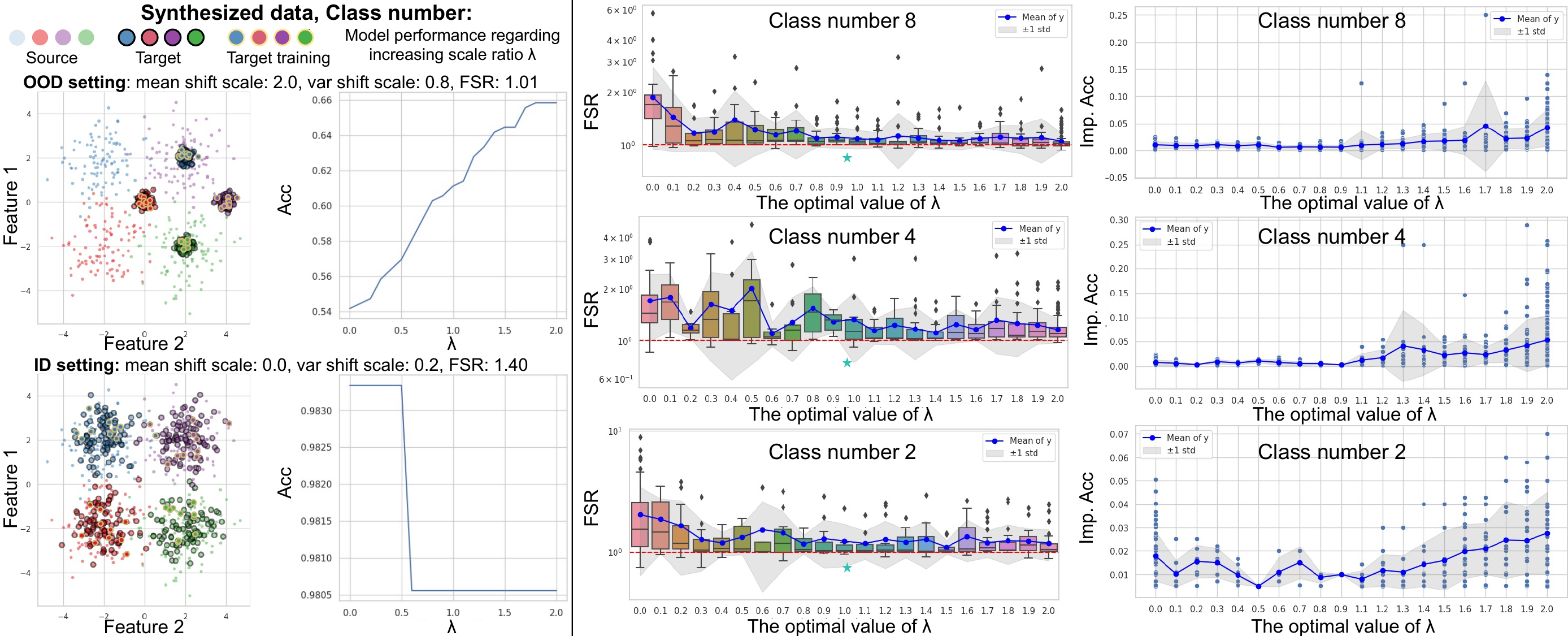}
    \caption{Toy example results. Left: Two generated data examples and their rescaling performance. The target training samples are $10\%$ of all target samples. Right: Statistical visualization of $FSR$ and the optimal scaling factor $\lambda$ across different class numbers. Note: Models with zero accuracy or unchanged accuracy after rescaling are excluded. See code and results in the supplementary materials.
    }
  \label{fig:toy_res}
  \vspace{-0.5cm}
\end{figure}

Computing the $FSR$ is often impractical during fine-tuning due to the large scale and limited access to pretraining datasets. Pathological foundation models \cite{lu2024visual,wang2024pathology,ding2024multimodal} typically use private datasets, complicating direct FSR measurement. To overcome this, we create controlled toy examples to empirically test the proposed $\lambda$ and investigate its link with $FSR$.

\textbf{Toy example training details and toy data generation.} 
We use a simple toy model (2-layer MLP with one LayerNorm) to simulate fine-tuning under domain shift. Synthetic data are class-conditional Gaussians with controlled mean and variance shifts to create source and target domains. We vary the number of labeled training target samples and shift levels to emulate different fine-tuning settings. After LayerNorm fine-tuning, we rescale $\gamma$ by $\lambda \in [0, 2]$ and identify the optimal $\lambda$ that maximizes accuracy. See \cref{app:Toy example details} for detailed descriptions of these toy examples.

\begin{wrapfigure}{r}{0.25\linewidth}
    \centering
    \captionof{table}{Overall statistics of toy example results. Not affected cases: In the cases where the accuracy does not change with respect to scaling ratio values.
    Avg. Imp.: Averaged improvements of all improved cases. 
    }
    \label{tab:toy_res_statistic}
    \resizebox{\linewidth}{!}{%
    \begin{tabular}{ll}
    \toprule
    Overall cases & {1815} \\
    Not affected cases & 290 \\
    Cases improved by $\lambda$ & 1326 \\
    \midrule
    Avg. Imp. &  0.0239 (2.39\%) \\
    \bottomrule
    \end{tabular}%
    }

\vspace{-0.5cm}
\end{wrapfigure}
\textbf{Rescaling $\gamma$ with an appropriate $\lambda$ enhances target-domain performance.}
As shown in \cref{tab:toy_res_statistic}, rescaling yields non-decreasing accuracy in over $90\%$ of cases and leads to measurable gains in $73\%$ of them, with an average improvement of $2.39\%$.
\textbf{The optimal value of $\lambda$ correlates negatively with $FSR$ yet positively with performance gains, suggesting that $\lambda$ can serve as an indicator for FSR when source data is inaccessible but target data is available.}
As shown in \cref{fig:banner}~(a), high $FSR$ values correspond to optimal $\lambda < 1$, while low $FSR$ values favor $\lambda > 1$. Moreover, \cref{fig:banner}~(b) highlights that $\lambda > 1$ configurations yield significantly higher gain, even exceeding $25\%$, compared to $\lambda < 1$ scenarios.

\textbf{The LayerNorm fine-tuning performance depends on how well the target training data represents the target domain under the influence of the source domain.}
\cref{fig:toy_res} left-hand side further illustrates this trend: in OOD settings with sufficient target data (lower $FSR$), $\lambda > 1$ leads to steady accuracy improvements, whereas in ID settings with limited data (larger $FSR$), $\lambda < 1$ results in smaller gains.
This further supports that the performance of LayerNorm fine-tuning ties to how representative the target training data is to the target domain, rather than directly ID or OOD settings.


\textbf{The observed tendencies are insensitive to the number of classes.}
As shown in the right-hand side of \cref{fig:toy_res}, the same relationships between $FSR$, $\lambda$, and fine-tuning improvements are consistently observed across different numbers of classes. This suggests that neither $FSR$ nor the optimal $\lambda$ is significantly affected by the label space $Y$. 

\textbf{Larger $FSR$ correlates with greater variance in the optimal value of $\lambda$, indicating increased randomness in LayerNorm convergence.}
As shown in \cref{fig:banner}~(a)(b) and \cref{fig:toy_res}, smaller values of $FSR$ correspond to reduced variance in the optimal $\lambda$. This suggests that while $\lambda < 1$ tends to be beneficial for larger $FSR$, it is generally less effective compared to $\lambda > 1$ in cases with smaller $FSR$, highlighting the variability in LayerNorm fine-tuning effectiveness while $FSR$ is large.

\section{Boosting LayerNorm fine-tuning for real-world application}
\label{sec:method}
\begin{wrapfigure}{r}{0.4\linewidth}
\vspace{-0.6cm}
  \centering
  \includegraphics[width=\linewidth]{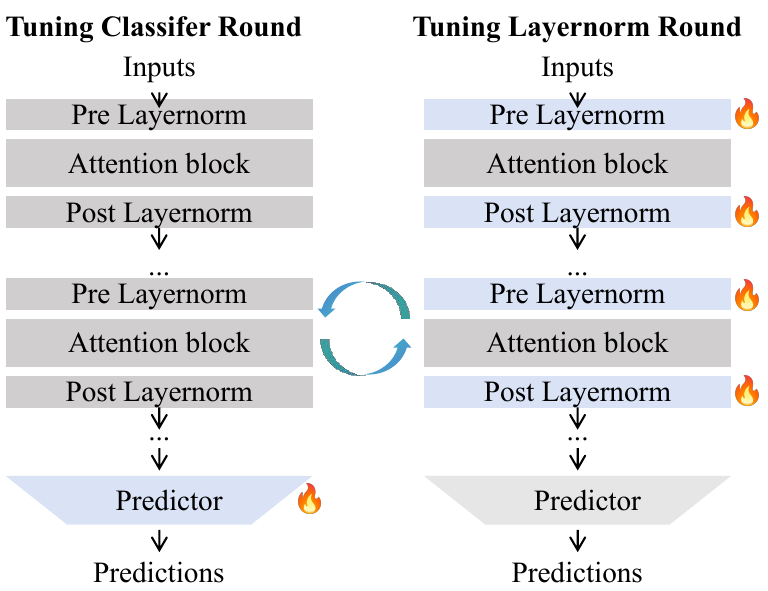}
    \caption{Diagrams of the proposed training approach. One complete turn contains both rounds. 
    }
  \label{fig:training}
    \centering
  \includegraphics[width=\linewidth]{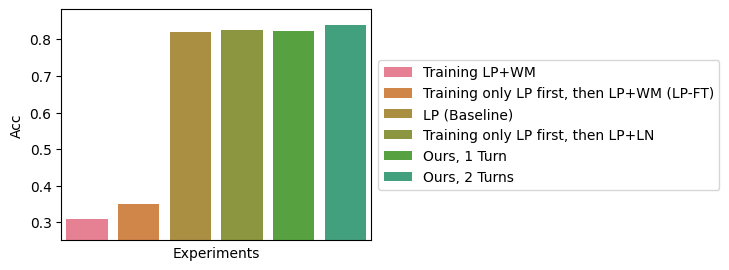}
  \vspace{-0.4cm}
    \caption{Comparisons of different fine-tuning strategies. Results show average performance across varying fine-tuning data fractions on the Bach dataset using the CONCH model. WM denotes the whole model $\mathcal{M}$; see experimental details in \cref{sec:path_exp} and additional results in \cref{app:More details about the proposed method}. 
    }
  \label{fig:path_diff_ft_ablation}
\vspace{-1.1cm}
\end{wrapfigure}
One presumption of when \cref{eq:shift_compare} holds is that other parameters, including the predictor, are fixed.
However, most ViTFs do not provide a $\mathcal{C}$ for every downstream task, where the $\mathcal{C}$ needs to be constructed during the fine-tuning stage. 
Otherwise, tuning LayerNorm layers with predictors (such as approaches proposed in \cite{zhao2023tuning}) may cause unstable capturing of the target data distributions and lead to performance degradation.

As \cref{fig:training} shows, our approach avoids such unstable optimization by training the predictors at first while the $\mathcal{M}$ is fixed (corresponding to \cref{eq:loss_diff} Data distribution gap term); and then tuning the LayerNorm layers while other parameters are fixed in a round (i.e., \cref{eq:loss_diff} Parameter adaptation gain term). Such a round can be repetitive, lasting several turns to ensure the convergence of the overall model. 
{Following~\cite{kumar2022fine}, we show this approach is feasible.}

\begin{proposition}
\label{prop:lp_bounded}
   Let: $\mathcal{M}^{S}$ be a source model, and $\mathcal{C}^{S}$ be an MLP-based linear predictor (LP) paired with $\mathcal{M}^{S}$ such that the loss $L(\mathcal{C}^{S}, \mathcal{M}^{S}(X^{S}), Y^{S})$ is minimized (where $L(\cdot)$ is the Mean Squared Error). Assume $\mathcal{C}^{S}$ is not available during fine-tuning. When freezing $\mathcal{M}^{S}$ (e.g., a frozen ViT backbone), there exists a target predictor $\mathcal{C}^{T'}$ such that $L(\mathcal{C}^{S}, \mathcal{M}^{S}(X^{T}), Y^{T})$ is minimized. Then, the distance $d(\mathcal{C}^{S}, \mathcal{C}^{T'})$ is bounded.
\end{proposition}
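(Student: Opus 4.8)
This is the linear-probing regularity result of \citet{kumar2022fine} specialised to our setting; the plan is to reduce both predictors to closed-form least-squares solutions and bound each one separately. Since $\mathcal{M}^{S}$ is frozen, I would write $\phi(\cdot):=\mathcal{M}^{S}(\cdot)\in\mathbb{R}^{d}$ for the fixed feature map and identify a linear predictor with a matrix $W\in\mathbb{R}^{k\times d}$, where $k$ is the number of classes. For any domain $\mathcal{D}$ the objective $W\mapsto\mathbb{E}_{(X,Y)\sim\mathcal{D}}\|W\phi(X)-Y\|^{2}$ is a convex quadratic, so its minimizers are exactly the solutions of the normal equations $W\Sigma^{\mathcal{D}}_{XX}=\Sigma^{\mathcal{D}}_{YX}$, with $\Sigma^{\mathcal{D}}_{XX}:=\mathbb{E}_{\mathcal{D}}[\phi(X)\phi(X)^{\top}]$ and $\Sigma^{\mathcal{D}}_{YX}:=\mathbb{E}_{\mathcal{D}}[Y\phi(X)^{\top}]$, and the minimum-norm solution is $W^{\mathcal{D}}=\Sigma^{\mathcal{D}}_{YX}\,(\Sigma^{\mathcal{D}}_{XX})^{+}$. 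Instantiating $\mathcal{D}$ as the source and target distributions gives $\mathcal{C}^{S}$ and $\mathcal{C}^{T'}$.

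\textbf{Step 1: uniform boundedness of the data moments.} The final sublayer of $\mathcal{M}^{S}$ is a LayerNorm \eqref{eq:ln}, whose output $\bigl((z-\mathrm{E}[z])/\sqrt{\operatorname{Var}[z]+\epsilon}\bigr)\odot\gamma+\beta$ has Euclidean norm at most $\sqrt{d}\,\|\gamma\|_{\infty}+\|\beta\|_{2}=:B$ regardless of its input, so $\|\phi(X)\|\le B$ almost surely on every domain. Classification targets are one-hot, hence $\|Y\|=1$. Consequently $\|\Sigma^{\mathcal{D}}_{XX}\|_{\mathrm{op}}\le B^{2}$ and $\|\Sigma^{\mathcal{D}}_{YX}\|_{\mathrm{op}}\le B$ for $\mathcal{D}\in\{\text{source},\text{target}\}$.

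\textbf{Step 2: non-degeneracy and the per-predictor bound.} By Assumptions~2--3 the frozen features already support an accurate linear predictor on each domain, which forces the restriction of $\Sigma^{\mathcal{D}}_{XX}$ to its range to have smallest eigenvalue bounded below by some $c_{\mathcal{D}}>0$ (otherwise a label-relevant feature direction would be almost annihilated and no $W$ could attain the assumed loss), and it guarantees each column of $\Sigma^{\mathcal{D}}_{YX}$ lies in $\operatorname{range}(\Sigma^{\mathcal{D}}_{XX})$, so the normal equations are consistent and $W^{\mathcal{D}}=\Sigma^{\mathcal{D}}_{YX}(\Sigma^{\mathcal{D}}_{XX})^{+}$ is an exact minimizer. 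Setting $c:=\min(c_{S},c_{T})>0$, $\|W^{\mathcal{D}}\|_{\mathrm{op}}\le\|\Sigma^{\mathcal{D}}_{YX}\|_{\mathrm{op}}\,\|(\Sigma^{\mathcal{D}}_{XX})^{+}\|_{\mathrm{op}}\le B/c$, so the triangle inequality gives $d(\mathcal{C}^{S},\mathcal{C}^{T'})=\|\mathcal{C}^{S}-\mathcal{C}^{T'}\|\le\|\mathcal{C}^{S}\|+\|\mathcal{C}^{T'}\|\le 2B/c<\infty$. For the sharper, shift-aware version one applies a standard perturbation bound for Moore--Penrose pseudo-inverses to get $d(\mathcal{C}^{S},\mathcal{C}^{T'})\lesssim\|\Sigma^{S}_{XX}-\Sigma^{T}_{XX}\|+\|\Sigma^{S}_{YX}-\Sigma^{T}_{YX}\|$, whose right-hand side is controlled by $Wasserstein(X^{S};X^{T})$ plus the negligible label shift, tying the predictor gap back to $Shift_{data}$ in the spirit of \eqref{eq:shift_compare}.

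\textbf{Main obstacle.} The only genuinely non-routine ingredient is the uniform lower bound $c>0$ on the smallest nonzero eigenvalue of the feature second-moment matrices, i.e.\ well-posedness of the probing objective on the effective feature subspace; without it the least-squares minimizer can be made arbitrarily large and the claim fails. I would discharge it from the standing Assumptions~2--3 as above. Two minor technical points remain: the possibly singular $\Sigma_{XX}$, handled throughout via the pseudo-inverse together with the consistency check of the normal equations; and whether the expectations are population or empirical averages over $X^{T}$ --- the argument is verbatim the same provided the empirical Gram matrix inherits the eigenvalue lower bound, which again follows from the features being informative for the finite training set.
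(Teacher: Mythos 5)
Your argument establishes the proposition as literally stated, but by a genuinely different and much cruder route than the paper. The paper's proof (\cref{app:math_details}) follows \cite{kumar2022fine}: it writes the target probe in closed form, $v_{\mathrm{lp}}^{\infty}=\left(Z^{T}X^{\top}XZ^{T\top}\right)^{-1}Z^{T}X^{\top}XZ^{S\top}v^{S}$, splits the error into a feature-shift term and a head-adjustment term, and obtains a bound on the predictor discrepancy that is \emph{proportional to the feature shift} $\sigma_{\max}(B^{T}-B_{S})$ up to the condition-number factor $1/\cos\theta_{\max}$. You instead bound $\|\mathcal{C}^{S}\|$ and $\|\mathcal{C}^{T'}\|$ separately — via the LayerNorm output bound $B$ and a lower bound $c$ on the smallest nonzero eigenvalue of the feature second-moment matrices — and finish with the triangle inequality $d(\mathcal{C}^{S},\mathcal{C}^{T'})\le 2B/c$. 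This is simpler and does prove bare boundedness (indeed, for two fixed finite-dimensional minimizers the claim is nearly vacuous, which your bound makes explicit), but it discards exactly the structure the paper uses downstream: \cref{eq:L_ood_bound} and \cref{eq:L_ood_bound_layernorm} need the bound to scale with $d(Z^{T},Z^{S})$ so that LayerNorm tuning, by shrinking the feature shift over successive turns, provably tightens it. Your sketched ``sharper, shift-aware version'' via pseudo-inverse perturbation is the part that actually corresponds to the paper's argument; as written it is only an aside, so if the proposition is meant to support \cref{sec:method} you would need to carry that perturbation bound out rather than rest on $2B/c$. Two smaller points: the lower bound $c>0$ does not really require Assumptions~2--3 — the smallest nonzero eigenvalue of a fixed PSD matrix is automatically positive, and the assumptions matter only if you want $c$ uniform or quantitative; and your population-covariance formulation is more general than the paper's realizable setting ($w_{\star}=B_{S}^{\top}v^{S}$ with orthonormal feature rows), which is what lets the paper collapse $\sigma_{\max}(B^{T})$ to $1$ and keep the constants clean.
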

\begin{proof}
    If $X^{S}$ and $X^{T}$ have no shift, \cref{prop:lp_bounded} is obvious. We mainly discuss whether there is a shift between them. 
    Considering the extracted features   $\mathcal{Z}$ of $\mathcal{M}^{S}(X)$ and the raw data space $\mathcal{X}$. 
    Since we have a well pretrained $\mathcal{M}^{S}$, following \cref{prop:lp_bounded}, we have 
    $\cos \theta_{\max}\left(\mathcal{Z}, \mathcal{X}\right)>0$. 
    Denote parameters of $\mathcal{C}^{T'}$ as $v_{\mathrm{lp}}^{T'}$, we have
    \begin{equation}
    \label{eq:L_ood_bound}
        {L\left(v_{\mathrm{lp}^{T'}}, Z^{T}\right)}^{1/2} \leq\left({c}/{\cos \theta_{\max }(\mathcal{Z},\mathcal{X} )}\right)^2 d\left(Z^{T}, Z^{S}\right)\| {{Z}^{S}}^{ \top }v_{\mathrm{lp}^{S}} \|_2^2,
    \end{equation}
    where $c > 0$ is a constant; $d(\cdot, \cdot)$ is a distance measurement. 
    Note \cref{eq:L_ood_bound} holds when the number of dimension of $v_{\mathrm{lp}}$ when $i \rightarrow {\infty}$. {Please refer to more details in \cref{app:math_details}.}
\end{proof}
Therefore, without fine-tuning LayerNorm, it is able to train $\mathcal{C}^{T'}$ with limited shifts from $\mathcal{C}^{S}$ using $X^{T}$. 
Then, we freeze the layers except the LayerNorm layers to minimize the \cref{eq:L_ood_bound}, where the problem turns into $\min_{\mathcal{LN}^{T'}}{L\left(\mathcal{LN}^{T'}, Z^{T}\right)}$ which is equivalent to:
\begin{equation}
    \label{eq:L_ood_bound_layernorm}
    \min_{\mathcal{LN}^{T'}} \left({c}/{\cos \theta_{\max }(\mathcal{M}^{T'}(\mathcal{X}),\mathcal{X} )}\right)^2 d\left(Z^{T}, \mathcal{M}^{T'}(X^{T})\right)\| \mathcal{M}^{T'}(X^{T})^{ \top }v_{\mathrm{lp}}^{T'} \|_2^2,
\end{equation}
where $\mathcal{LN}^{T'}, \mathcal{M}^{T'}$ correspond to the LayerNorm layers and the overall model tuned with $\mathcal{C}^{T'}$. It is obvious that the tuning LayerNorm layers do not change the knowledge obtained by the overall model therefore, the $\cos \theta_{\max }(\mathcal{M}^{T'}(\mathcal{X}),\mathcal{X} )$ can be considered to be as very close to the $\cos \theta_{\max }(\mathcal{M}^{S}(\mathcal{X}),\mathcal{X} )$, so does 
$\|\mathcal{M}^{T'}(X^{T})^{ \top }v_{\mathrm{lp}}^{T'} \|_2^2$ 
to $\|{Z^{S}}^{T} v_{\mathrm{lp}}^{T'} \|_2^2$.
\red{As discussed in \cref{sec:Tuning LayerNorm layers in the model collectively capture the data distribution shifts},  LayerNorm fine-tuning captures the data distributional shifts, hence $d\left(Z^{T}, \mathcal{M}^{T'}(X^{T})\right)\| \mathcal{M}^{T'}(X^{T}) \leq d\left(Z^{T}, Z^{S})\right)\| \mathcal{M}^{T'}(X^{T})$.} 
{Therefore, the bounds of the loss become tighter than \cref{eq:L_ood_bound_layernorm} during the the increasing number of turns.} This indicates why the proposed method as shown in \cref{fig:training} is feasible and stable. 
To further enhance LayerNorm fine-tuning, we introduce two additional techniques: (i) increasing the feature dimension before prediction and (ii) applying a lightweight feature augmentation prior to attention pooling. Details and ablations of them can be seen in \cref{app:More details about the proposed method}.


\textbf{Differences between our proposed method and LP-FT~\cite{kumar2022fine}.} 
\cite{kumar2022fine} proposes an LP-FT strategy, where the linear-probability (LP) predictor is fine-tuned first, followed by fine-tuning of the entire model, including all layers in $\mathcal{M}$ and the predictor. However, our experiments, as shown in \cref{fig:path_diff_ft_ablation}, indicate that this approach is generally ineffective for most ViTF fine-tuning scenarios. Given the typically limited amount of target training data, fine-tuning the full model, even with the LP-FT strategy, often leads to notable performance degradation. Moreover, when the LP-FT strategy is applied with fine-tuning restricted to LayerNorm, performance still deteriorates due to a violation of the $\mathcal{C}$-fixing presumption, as discussed in \cref{sec:Tuning LayerNorm layers in the model collectively capture the data distribution shifts}.

\section{Experiments}
\label{sec:exp}
\subsection{LayerNorm fine-tuning for natural images ViTFs:
ID, OOD, and complex settings}
\label{sec:ID LayerNorm fine-tuning vs OOD LayerNorm fine-tuning}

\begin{table}[t]
\caption{The results of (1) ID and OOD LayerNorm fine-tuning using MAE and (2) fine-tuning results of CLIP and DINOv2 on the DomainNet dataset. Domains are split as:
A = $\{Clipart, Infograph, Quickdraw \} $.
B = $\{Real, Painting, Sketch \}$.  
Setting \textbf{* to *}: pretrained on the former domain set, then fine-tuned and evaluated on the latter one.
Setting \textbf{To *, test on *}: fine-tuned on the former domain set, then evaluated on the latter one.
}
\label{tab:ID_OOD_res}
\begin{center}
\resizebox{0.85\textwidth}{!}{%
\begin{tabular}{cccc c >{\columncolor[HTML]{DAE8FC}}c >{\columncolor[HTML]{DAE8FC}}c  >{\columncolor[HTML]{DAE8FC}}c  |cccc c >{\columncolor[HTML]{DAE8FC}}c >{\columncolor[HTML]{DAE8FC}}c  >{\columncolor[HTML]{DAE8FC}}c }
\toprule[1.5pt]
\multicolumn{16}{c}{\textbf{MAE}} \\ \midrule
ID: A to A & LP+FM & LP & LP+LN & \cellcolor[HTML]{DAE8FC}\textbf{Ours} & Best $\lambda$ & +best $\lambda$ & {+$\lambda=1.2$} & OOD: A to B & LP+FM & LP & LP+LN & \cellcolor[HTML]{DAE8FC}\textbf{Ours} & Best $\lambda$ & +best $\lambda$ &+$\lambda\!=\!1.2$ \\
$1\%$ & 24.98 & 43.29 & 45.32 & \cellcolor[HTML]{DAE8FC}54.79 & 1.3 & 54.92 & {54.87} & $1\%$ & 24.03 & 39.8 & 40.21 & \cellcolor[HTML]{DAE8FC}40.28 & 2 & 41.76 & 40.82 \\
$5\%$ & 49.83 & 59.56 & 63.39 & \cellcolor[HTML]{DAE8FC}68.26 & 1 & 68.26 & {68.02} & $5\%$ & 48.97 & 56.56 & 57.27 & \cellcolor[HTML]{DAE8FC}57.8 & 1.2 & 57.95 & 57.95 \\
$10\%$ & 66.18 & 64.01 & 66.13 & \cellcolor[HTML]{DAE8FC}74.38 & 1.5 & 74.68 & {74.53} & $10\%$ & 58.06 & 58.8 & 61.01 & \cellcolor[HTML]{DAE8FC}61.16 & 1.5 & 61.43 & 61.38 \\
$50\%$ & 77.99 & 71.62 & 76.32 & \cellcolor[HTML]{DAE8FC}80.87 & 1.3 & 81.09 & {80.99} & $50\%$ & 69.56 & 70.13 & 69.47 & \cellcolor[HTML]{DAE8FC}69.81 & 1.3 & 70.08 & 70.07 \\
Avg & 54.75 & 59.62 & 62.79 & \cellcolor[HTML]{DAE8FC}69.58 & - & 69.74 & {69.6} & Avg & 50.16 & 56.32 & 56.99 & \cellcolor[HTML]{DAE8FC}57.26 & - & 57.81 & 57.56 \\ \hline
ID: B to B & LP+FM & LP & LP+LN & \cellcolor[HTML]{DAE8FC}\textbf{Ours} & Best $\lambda$ & +best $\lambda$ & {+$\lambda=1.2$} & OOD: B to A & LP+FM & LP & LP+LN & \cellcolor[HTML]{DAE8FC}\textbf{Ours} & Best $\lambda$ & +best $\lambda$ &+$\lambda\!=\!1.2$ \\
$1\%$ & 23.34 & 45.48 & 45 & \cellcolor[HTML]{DAE8FC}45.4 & 1.3 & 45.5 & {45.47} & $1\%$ & 24.55 & 43.76 & 45.4 & \cellcolor[HTML]{DAE8FC}45.51 & 1.2 & 45.49 & 45.49 \\
$5\%$ & 47.47 & 59.33 & 60.57 & \cellcolor[HTML]{DAE8FC}60.59 & 0.8 & 60.95 & {60.38} & $5\%$ & 24.55 & 59.22 & 63.11 & \cellcolor[HTML]{DAE8FC}62.81 & 1.1 & 62.87 & 62.86 \\
$10\%$ & 60.65 & 64.81 & 65.75 & \cellcolor[HTML]{DAE8FC}66.35 & 1.2 & 66.49 & {66.49} & $10\%$ & 65.21 & 64 & 66.35 & \cellcolor[HTML]{DAE8FC}67.71 & 1.5 & 67.92 & 67.86 \\
$50\%$ & 69.68 & 70.55 & 72.89 & \cellcolor[HTML]{DAE8FC}73.67 & 1.5 & 73.9 & {73.88} & $50\%$ & 78.7 & 72.28 & 76.25 & \cellcolor[HTML]{DAE8FC}77.52 & 1.3 & 77.62 & 77.61 \\
Avg & 50.29 & 60.04 & 61.05 & \cellcolor[HTML]{DAE8FC}61.5 & - & 61.71 & {61.56} & Avg & 48.25 & 59.82 & 62.78 & \cellcolor[HTML]{DAE8FC}63.39 & - & 63.48 & 63.46 \\ \midrule
\multicolumn{16}{c}{\textbf{CLIP} (ID/OOD settings are unknown)} \\ \midrule
To A, test on A & LP+FM & LP & LP+LN & \cellcolor[HTML]{DAE8FC}\textbf{Ours} & Best $\lambda$ & +best $\lambda$ & {+$\lambda=1.2$} & To A, test on B & LP+FM & LP & LP+LN & \cellcolor[HTML]{DAE8FC}\textbf{Ours} & Best $\lambda$ & +best $\lambda$ &+$\lambda\!=\!1.2$ \\
$1\%$ & 36.98 & 75.45 & 77.86 & \cellcolor[HTML]{DAE8FC}88.13 & 2 & 88.54 & {88.28} & $1\%$ & 17.84 & 42.72 & 45.71 & \cellcolor[HTML]{DAE8FC}61.29 & 2 & 62.16 & 61.36 \\
$5\%$ & 62.74 & 92.31 & 93.24 & \cellcolor[HTML]{DAE8FC}91.38 & 1.3 & 91.41 & {91.38} & $5\%$ & 32.07 & 64.68 & 66.87 & \cellcolor[HTML]{DAE8FC}62.23 & 1.3 & 62.48 & 62.41 \\
$10\%$ & 83.35 & 50.41 & 57.29 & \cellcolor[HTML]{DAE8FC}93.45 & 1.5 & 93.48 & {93.5} & $10\%$ & 54.85 & 28.29 & 29.7 & \cellcolor[HTML]{DAE8FC}67.29 & 1.5 & 67.77 & 67.53 \\
$50\%$ & 86.78 & 90.49 & 92.25 & \cellcolor[HTML]{DAE8FC}94.44 & 1.1 & 94.45 & {94.41} & $50\%$ & 57.37 & 62.5 & 64.61 & \cellcolor[HTML]{DAE8FC}67.02 & 1.1 & 67.07 & 67.25 \\
Avg & 67.46 & 77.17 & 80.16 & \cellcolor[HTML]{DAE8FC}91.85 & - & 91.97 & {91.89} & Avg & 40.53 & 49.55 & 51.72 & \cellcolor[HTML]{DAE8FC}64.46 & - & 64.87 & 64.64 \\ \hline
To B, test on B & LP+FM & LP & LP+LN & \cellcolor[HTML]{DAE8FC}\textbf{Ours} & Best $\lambda$ & +best $\lambda$ & {+$\lambda=1.2$} & To B, test on A & LP+FM & LP & LP+LN & \cellcolor[HTML]{DAE8FC}\textbf{Ours} & Best $\lambda$ & +best $\lambda$ &+$\lambda\!=\!1.2$ \\
$1\%$ & 13.11 & 43.15 & 45 & \cellcolor[HTML]{DAE8FC}70 & 1.2 & 70.24 & {70.24} & $1\%$ & 9.98 & 54.48 & 52.54 & \cellcolor[HTML]{DAE8FC}79.76 & 1.2 & 79.86 & 79.86 \\
$5\%$ & 56.13 & 79.07 & 81.34 & \cellcolor[HTML]{DAE8FC}77.51 & 1 & 77.51 & {77.26} & $5\%$ & 41.71 & 84.64 & 86.65 & \cellcolor[HTML]{DAE8FC}78.98 & 1 & 78.98 & 79.02 \\
$10\%$ & 71.52 & 33.08 & 32.36 & \cellcolor[HTML]{DAE8FC}82.79 & 1.2 & 82.86 & {82.86} & $10\%$ & 67.3 & 39 & 36.66 & \cellcolor[HTML]{DAE8FC}89.61 & 1.2 & 89.67 & 89.67 \\
$50\%$ & 75.98 & 76.28 & 79.02 & \cellcolor[HTML]{DAE8FC}86.55 & 1.2 & 86.61 & {86.61} & $50\%$ & 67.5 & 85.81 & 86.81 & \cellcolor[HTML]{DAE8FC}90.18 & 1.2 & 90.11 & 90.11 \\
Avg & 54.19 & 57.9 & 59.43 & \cellcolor[HTML]{DAE8FC}79.21 & - & 79.31 & {79.24} & Avg & 46.62 & 65.98 & 65.67 & \cellcolor[HTML]{DAE8FC}84.63 & - & 84.66 & 84.67 \\ \midrule
\multicolumn{16}{c}{\textbf{DINOv2} (ID/OOD settings are unknown)} \\ \midrule
To A, test on A & LP+FM & LP & LP+LN & \cellcolor[HTML]{DAE8FC}\textbf{Ours} & Best $\lambda$ & +best $\lambda$ & {+$\lambda=1.2$} & To A, test on B & LP+FM & LP & LP+LN & \cellcolor[HTML]{DAE8FC}\textbf{Ours} & Best $\lambda$ & +best $\lambda$ &+$\lambda\!=\!1.2$ \\
$1\%$ & 72.76 & 80.33 & 83.16 & \cellcolor[HTML]{DAE8FC}90.29 & 1.3 & 90.37 & {90.36} & $1\%$ & 34.13 & 43.17 & 48.86 & \cellcolor[HTML]{DAE8FC}55.78 & 1.3 & 56.1 & 55.97 \\
$5\%$ & 90.21 & 92.34 & 93.71 & \cellcolor[HTML]{DAE8FC}93.48 & 1.3 & 93.56 & {93.5} & $5\%$ & 46.41 & 57.81 & 62.98 & \cellcolor[HTML]{DAE8FC}62.81 & 1.3 & 63.16 & 62.99 \\
$10\%$ & 90.21 & 73.75 & 84.19 & \cellcolor[HTML]{DAE8FC}93.74 & 1.3 & 93.82 & {93.75} & $10\%$ & 57.28 & 41.31 & 48.65 & \cellcolor[HTML]{DAE8FC}62.47 & 1.3 & 62.62 & 62.61 \\
$50\%$ & 93.79 & 92.44 & 94.71 & \cellcolor[HTML]{DAE8FC}95.79 & 1.3 & 95.97 & {95.92} & $50\%$ & 62.49 & 58.69 & 65.08 & \cellcolor[HTML]{DAE8FC}66.4 & 1.3 & 66.76 & 66.63 \\
Avg & 86.74 & 84.72 & 88.94 & \cellcolor[HTML]{DAE8FC}93.33 & - & 93.43 & {93.38} & Avg & 50.08 & 50.25 & 56.39 & \cellcolor[HTML]{DAE8FC}61.87 & - & 62.16 & 62.05 \\ \hline
To B, test on B & LP+FM & LP & LP+LN & \cellcolor[HTML]{DAE8FC}\textbf{Ours} & Best $\lambda$ & +best $\lambda$ & {+$\lambda=1.2$} & To B, test on A & LP+FM & LP & LP+LN & \cellcolor[HTML]{DAE8FC}\textbf{Ours} & Best $\lambda$ & +best $\lambda$ &+$\lambda\!=\!1.2$ \\
$1\%$ & 50.71 & 50.71 & 55.16 & \cellcolor[HTML]{DAE8FC}70.06 & 1.5 & 70.82 & {70.44} & $1\%$ & 62.17 & 48.03 & 62.5 & \cellcolor[HTML]{DAE8FC}80.14 & 1.5 & 80.45 & 80.34 \\
$5\%$ & 79.9 & 79.9 & 82.55 & \cellcolor[HTML]{DAE8FC}81.21 & 1.1 & 81.13 & {81.12} & $5\%$ & 83.71 & 70.14 & 87.11 & \cellcolor[HTML]{DAE8FC}86.3 & 1.1 & 86.4 & 86.47 \\
$10\%$ & 47.47 & 47.47 & 66.05 & \cellcolor[HTML]{DAE8FC}81.53 & 1.3 & 81.75 & {81.68} & $10\%$ & 49.53 & 83.46 & 68.03 & \cellcolor[HTML]{DAE8FC}90.12 & 1.3 & 90.31 & 90.29 \\
$50\%$ & 79.48 & 79.48 & 84.76 & \cellcolor[HTML]{DAE8FC}87.37 & 1.2 & 87.44 & {87.44} & $50\%$ & 85.51 & 83.67 & 90.33 & \cellcolor[HTML]{DAE8FC}91.25 & 1.2 & 91.28 & 91.28 \\
Avg & 64.39 & 64.39 & 72.13 & \cellcolor[HTML]{DAE8FC}80.04 & - & 80.29 & {80.17} & Avg & 70.23 & 71.33 & 76.99 & \cellcolor[HTML]{DAE8FC}86.95 & - & 87.11 & 87.1 \\ \bottomrule[1.5pt]
\end{tabular}%
}
\end{center}
\caption{Results of fine-tuning OpenCLIP and DINOv2 on SUN datasets. 
}
\label{tab:natural_image_res}
\centering
\resizebox{0.85\textwidth}{!}{%
\begin{tabular}{cccc
>{\columncolor[HTML]{DAE8FC}}c 
>{\columncolor[HTML]{DAE8FC}}c 
>{\columncolor[HTML]{DAE8FC}}c |cccc
>{\columncolor[HTML]{DAE8FC}}c 
>{\columncolor[HTML]{DAE8FC}}c 
>{\columncolor[HTML]{DAE8FC}}c }
\toprule[1.5pt]
\textbf{OOD: OpenCLIP To SUN} & LP+FM & LP & LP+LN & Ours & Best $\lambda$ value & +best $\lambda$ & \textbf{ID: DINOv2 To SUN} & LP+FM & LP & LP+LN & Ours & Best $\lambda$ value & +best $\lambda$ \\
$2\%$ & 34.78 & 20.97 & 32.27 & 35.49 & 1.0 & 35.49 & $2\%$ & 46.80 & 45.23 & 43.95 & 45.16 & 0.2 & 45.51 \\
$5\%$ & 58.24 & 51.29 & 59.35 & 58.70 & 1.5 & 59.06 & $5\%$ & 62.62 & 62.33 & 63.13 & 63.32 & 0.2 & 63.38 \\
$10\%$ & 64.30 & 61.96 & 67.30 & 66.13 & 1.1 & 66.15 & $10\%$ & 66.56 & 67.47 & 67.86 & 68.52 & 0.2 & 68.52 \\
$100\%$ & 73.68 & 76.13 & 75.19 & 76.70 & 1.3 & 76.80 & $100\%$ & 74.24 & 77.13 & 77.22 & 77.84 & 0.3 & 77.96 \\
Avg & 57.75 & 52.59 & 58.53 & 59.25 & - & 59.37 & Avg & 62.56 & 63.04 & 63.04 & 63.71 & - & 63.84 \\ \midrule
\textbf{OOD: OpenCLIP To DTD} & LP+FM & LP & LP+LN & Ours & Best $\lambda$ value & +best $\lambda$ & \textbf{ID: DINOv2 To DTD} & LP+FM & LP & LP+LN & Ours & Best $\lambda$ value & +best $\lambda$ \\
$2\%$ & 0.00 & 0.00 & 0.00 & 6.38 & 1.2 & 6.38 & $2\%$ & 2.13 & 2.13 & 2.13 & 4.26 & 0.2 & 4.26 \\
$5\%$ & 32.98 & 11.70 & 37.23 & 45.74 & 1.2 & 46.81 & $5\%$ & 5.32 & 55.32 & 57.45 & 63.83 & 0.2 & 63.83 \\
$10\%$ & 50.53 & 25.00 & 47.34 & 51.06 & 1.2 & 51.06 & $10\%$ & 4.26 & 56.91 & 59.57 & 61.17 & 0.2 & 61.17 \\
$100\%$ & 48.83 & 79.47 & 79.20 & 79.89 & 1.2 & 80.05 & $100\%$ & 15.64 & 78.99 & 81.81 & 80.59 & 2.0 & 81.12 \\
Avg & 33.09 & 29.04 & 40.94 & 45.77 & - & 46.08 & Avg & 6.84 & 48.34 & 50.24 & 52.46 & - & 52.59 \\ \midrule
\textbf{OOD: OpenCLIP To Bach} & LP+FM & LP & LP+LN & Ours & Best $\lambda$ value & +best $\lambda$ & \textbf{OOD: DINOv2 To Bach} & LP+FM & LP & LP+LN & Ours & Best $\lambda$ value & +best $\lambda$ \\
1 & 33.50 & 44.50 & 31.50 & 37.00 & 1.2 & 37.50 & 1 & 27.00 & 32.50 & 39.50 & 37.50 & 1.3 & 37.50 \\
5 & 49.50 & 50.00 & 46.50 & 52.00 & 1.3 & 55.00 & 5 & 25.00 & 51.50 & 52.00 & 53.50 & 1.5 & 59.00 \\
10 & 50.00 & 49.00 & 54.00 & 59.00 & 1.2 & 61.50 & 10 & 25.00 & 62.00 & 59.50 & 61.50 & 1.1 & 60.50 \\
20 & 43.00 & 47.50 & 50.00 & 60.50 & 1.1 & 60.50 & 20 & 29.50 & 61.00 & 63.00 & 61.50 & 1.0 & 61.50 \\
All & 66.50 & 63.50 & 67.00 & 70.50 & 1.2 & 71.00 & All & 45.00 & 74.00 & 74.50 & 76.50 & 0.6 & 77.00 \\
Avg & 48.50 & 50.90 & 49.80 & 55.80 & - & 57.10 & Avg & 30.30 & 56.20 & 57.70 & 58.10 & - & 59.10 \\ \bottomrule[1.5pt]
\end{tabular}%
}
\vspace{-0.5cm}
\end{table}

\textbf{Experimental settings.}
To better understand the behavior of LayerNorm fine-tuning, we conduct experiments by pretraining an unsupervised MAE~\cite{he2022masked} and subsequently fine-tuning it on both in-domain (ID) and out-of-domain (OOD) samples (domain split details can be seen in \cref{tab:ID_OOD_res}).
Specifically, we use the natural image dataset, DomainNet dataset~\cite{peng2019moment}, which contains six distinct domains: Real, Painting, Sketch, Clipart, Infograph, and Quickdraw. 
The MAE is pre-trained and fine-tuned using ID settings (same domains) and OOD settings (unseen domains) of the DomainNet dataset.
For comparison, we also apply LayerNorm fine-tuning to the large-scale pretrained ViTFs of natural images, where the domain shift between the source and target is difficult to determine. Here, we employ the most commonly used natural image ViTFs, including CLIP~\cite{radford2021learning} (ViT-B/32), OpenCLIP~\cite{radford2021learning} (ViT-B/32 pretrained on DataComp-1B), and DINOv2~\cite{oquab2023dinov2} (pretrained on LVD-142M) on the DomainNet domains.
Additionally, we employ natural image datasets, SUN~\cite{xiao2010sun} and DTD~\cite{cimpoi2014describing} (included by LVD-142M for DINOv2), accompanied by the pathological dataset, Bach~\cite{aresta2019bach}, for testing the ID and OOD settings for natural image ViTFs. 
We fine-tune all models using varying fractions of labeled target-domain data ($1\%, 5\%, 10\%, 50\%$) and evaluate on held-out test sets.
Evaluation results are taken from the final epoch, and $\lambda$ is further chosen based on test set performance as the indicator of $FSR$.
See \cref{app:Natural image experimental details} for full details.

\begin{wrapfigure}{r}{0.33\linewidth}
    \centering
  \includegraphics[width=\linewidth]{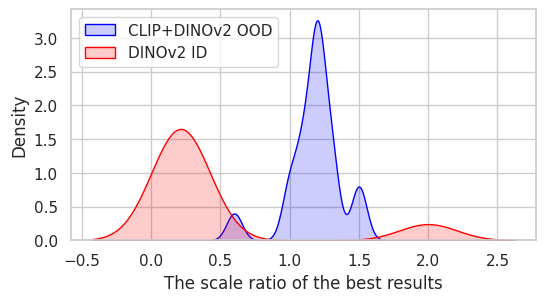}
    \caption{
    Kernel density estimation visualization of the optimal (best) values of $\lambda$ from  \cref{tab:natural_image_res} for large-scale pretrained ViTFs across different settings. 
    }
  \label{fig:ID_OOD_ratio_density_2}
\vspace{-0.3cm}
\end{wrapfigure}
\textbf{Comparisons.}
For comparison, we additionally conduct 
(1) LP+FM: Fine-tuning the full $\mathcal{M}$ with LP,
(2) LP: Training the LP model with the frozen $\mathcal{M}$,
and (3) LP+LN: Fine-tuning the $\mathcal{LN}$ with LP simultaneously.
All experiments, including our method and the comparisons, use the same overall training epochs and learning rates for a fair comparison.

\textbf{Results.} 
Our experimental results in \cref{tab:ID_OOD_res,tab:natural_image_res} reveal several key insights regarding LayerNorm fine-tuning. First, both the ID and OOD settings significantly impact the final performance, as does the amount of available fine-tuning data. 
Specifically, using ID or a larger amount of target training data would lead to better performance. 
Despite these variations, LayerNorm fine-tuning consistently performs well across different training data fractions in both ID and OOD scenarios. Moreover, our proposed method outperforms standard LayerNorm fine-tuning across all settings. 
Interestingly, we observe that the optimal scaling parameter $\lambda$ exhibits a mild correlation with the ID or OOD setting, although the domain shift between source and target does not directly determine it. As shown in \cref{fig:banner}~(c), the distribution of optimal $\lambda$ in the MAE ID setting shows a left-skewed tail, while the OOD setting displays a right-skewed tail. This suggests that in OOD scenarios, the limited target training data is more likely to be insufficient to fully represent the target domain distribution, leading to a larger $FSR$. 
Moreover, the observed $\lambda$ distributions for both CLIP and DINOv2 resemble those typically associated with OOD scenarios, aligning with the fact that DomainNet was not included in the pretraining datasets of either model.


\textbf{Additional results of large-scale pretrained ViTFs.} 
\cref{tab:natural_image_res} shows that our proposed LayerNorm fine-tuning method consistently outperforms baseline approaches, particularly under limited labeled target data. 
It is worth noticing that the LP-FM may experience severe collapse if there are very limited target training data.  
Moreover, rescaling using $\lambda$ may need further improvements. 
As indicated in \cref{fig:ID_OOD_ratio_density_2},
for SUN and DTD datasets, the optimal $\lambda$ values differ significantly between the two models: OpenCLIP tends to favor $\lambda \ge 1.0$, whereas DINOv2 achieves optimal performance with $\lambda < 1.0$. 
This trend remains consistent across all fractions of labeled target data, suggesting that OpenCLIP exhibits a smaller $FSR$, indicative of an OOD setting, in contrast to DINOv2 on the SUN dataset, which reflects an ID scenario. 
For the Bach dataset, both models obtain relatively larger $\lambda$. Moreover, $\lambda$ is reduced for \textit{OOD: DINOv2 To Bach} but increased for \textit{ID: DINOv2 To DTD} when more target training data are used, indicating that the $FSR$ tends to be close to $1$.
Thus, though the source data distribution may be intractable, it is possible to use the optimal value of $\lambda$ to reflect the shift of the target training data compared to the overall testing data from the source domain.
\textbf{Therefore, we recommend a $\lambda \ge 1$ (such as $1.2$) for most natural image fine-tuning settings. However,
suppose there is a certain prior knowledge of whether it is an ID or OOD setting. In that case, we recommend setting $\lambda$ larger for the OOD setting and $\lambda$ smaller for the ID setting, especially where there are insufficient target training samples.}
More results and analysis are in \cref{app:Natural image experimental details}.






\subsection{LayerNorm fine-tuning for Pathology images ViTFs}
\label{sec:path_exp}

\textbf{Experimental settings.}
We evaluate the fine-tuning performance of pathological ViTFs (CONCH~\cite{lu2024visual}, CHEIF~\cite{wang2024pathology}, and TITAN~\cite{ding2024multimodal})
using five benchmark datasets: Bach~\cite{aresta2019bach}, Breakhis~\cite{spanhol2015dataset}, CCRCC~\cite{brummer2023computational} (including both binary and tissue subsets), and Glas~\cite{sirinukunwattana2017gland}. For each dataset, we randomly split the labeled data in half, using one half exclusively for testing. The target training samples are randomly drawn from the remaining half. Specifically, we experimented with $1$, $5$, $10$, and $20$ samples per class, and we additionally included experiments using all available target training samples for Bach and Glas. A fixed random seed $42$ is used for all data splits. The experimental settings and baseline methods remain consistent with those used for natural image datasets, except for the omission of LP+FM tuning due to its consistently poor performance, particularly in limited target training samples scenarios, as demonstrated in \cref{tab:ID_OOD_res,tab:natural_image_res}.  Similar to natural image ViTFs,  $\lambda$ is further chosen based on test set performance as the indicator of $FSR$.

\textbf{Results.}
The results illustrated in \cref{fig:path_res} demonstrate the consistent superiority of our proposed method and the impact of rescaling $\gamma$ using $\lambda$ across multiple datasets and pathological ViTFs. 
Compared to baseline approaches, LP and LP+LN, our approach achieves higher accuracy in most settings, highlighting its strong feasibility to varying data distributions. 

\begin{wrapfigure}{r}{0.48\linewidth}
\centering
  \includegraphics[width=\linewidth]{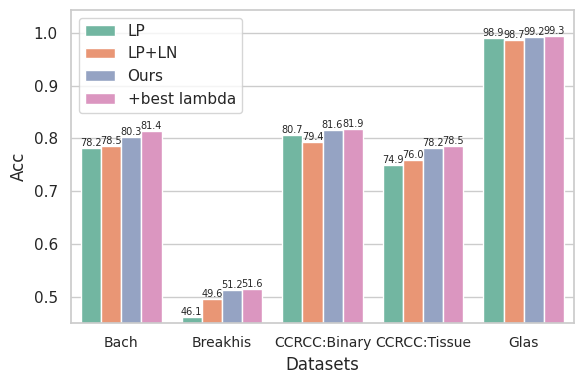}
    \caption{Results of fine-tuning pathological ViTFs across datasets. Note here that all results are averaged across different numbers of target training samples per class and various pathological ViTFs. Please refer to more details for each model in Appendix~\cref{fig:path_res_details}.
    }
  \label{fig:path_res}

  \centering
  \includegraphics[width=\linewidth]{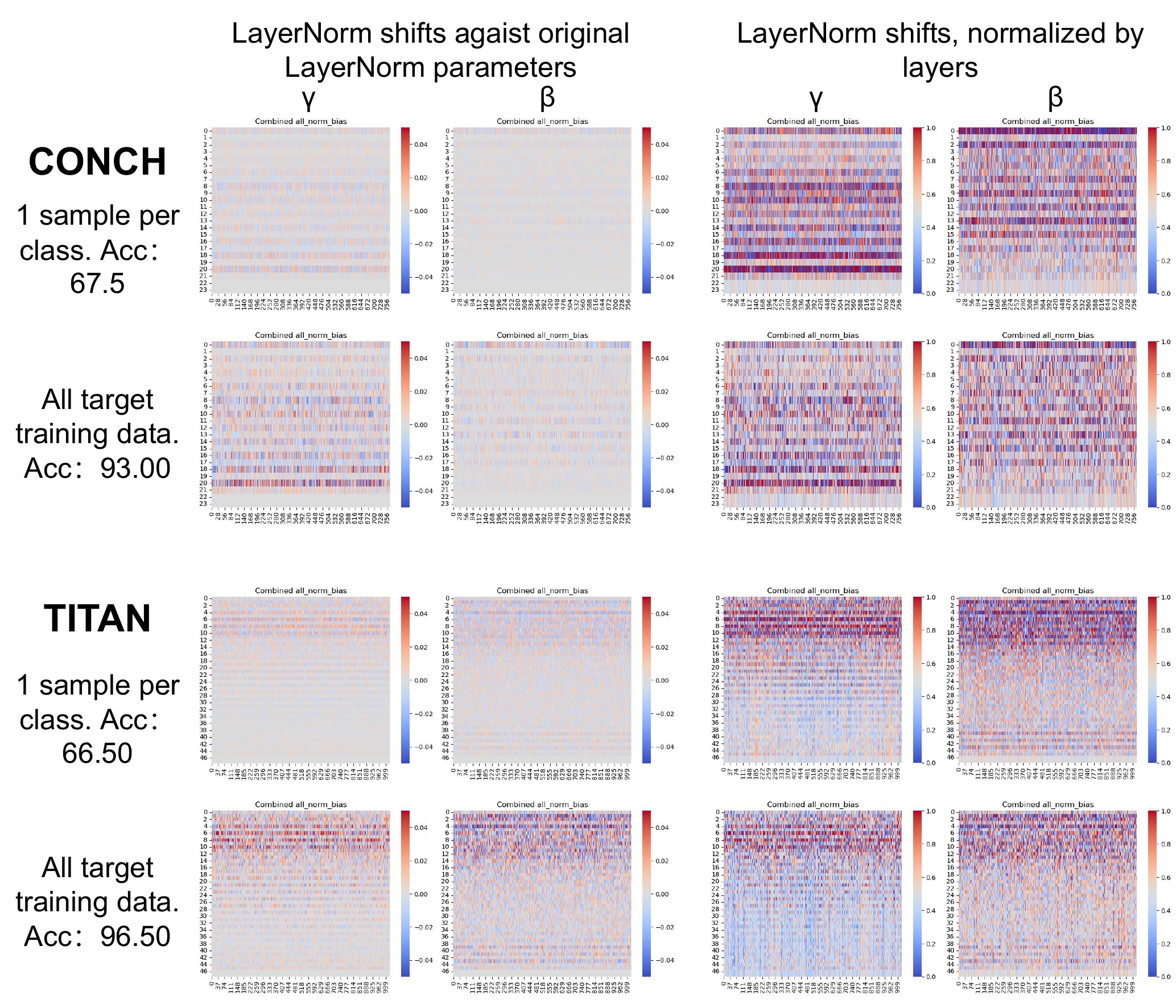}
    \caption{
    Visualizations of LayerNorm shifts on the Bach dataset. See more details in Appendix~\cref{fig:path_norm_vis_detailed}.
    }
  \label{fig:path_norm_vis}
\vspace{-0.5cm}
\end{wrapfigure}
Moreover, the additional performance gains achieved through rescaling align with our analysis in \cref{sec:Rescaling LayerNorm shifts leads to improvements}. As shown in \cref{fig:banner}~(d), the optimal $\lambda$ values for ViTFs on pathological images are notably smaller than those on natural images, with a clear gap between the two. {We hypothesize that this is due to more pronounced domain shifts in natural images, whereas pathological images exhibit subtler variations, resembling an ID setting with a relatively larger $FSR$ under most scenarios.}  
\textbf{Accordingly, without prior knowledge, we recommend setting $\lambda \le 1$ for pathological ViFTs fine-tuning on pathological images.} More results of our proposed LayerNorm fine-tuning method can be seen in \cref{app:LayerNorm fine-tuning for Pathology images ViTFs: Experimental details and more results}.

\textbf{Other findings in LayerNorm fine-tuning.}
\Cref{fig:path_norm_vis} visualizes the shifts in all LayerNorm parameters, the scale $\gamma$ and bias $\beta$ within the $\mathcal{M}$, where even-numbered layers correspond to pre-attention LayerNorm and odd-numbered layers to post-attention LayerNorm. The visualization reveals several key insights. 
(1)~Pre-attention LayerNorms exhibit greater shifts and thus contribute more significantly to fine-tuning performance than their post-attention counterparts. 
(2)~The specific LayerNorm layers requiring tuning vary across different ViT architectures, indicating model-dependent behavior rather than a consistent relationship with the number of target training samples. Thus, tuning all LayerNorm layers would be a robust approach for all ViTFs. 
(3)~The shifts in $\gamma$ and $\beta$ are not always synchronized across layers (e.g., for CONCH), suggesting that scale and bias may play distinct roles during adaptation. (4)~Lastly, increasing the number of tuning samples generally leads to sparser LayerNorm updates and better performance, implying greater confidence in fewer parameters. 
Sparsifying LayerNorm updates in low-data regimes can be beneficial but is challenging due to model-specific complexities and sample selection uncertainties. More attempts and the corresponding results can be seen in \cref{app:Attempts for sparsing the LayerNorm shifts while there are fewer target training samples}.

\section{Conclusion and discussions}
\textbf{Discussions.}
This paper studies the interaction between  LayerNorm fine-tuning and different amounts of target training data or with domain shifts through extensive analysis and experiments. 
Our findings also highlight the importance of selecting representative fine-tuning samples: when the sufficiency of target training data is uncertain, one can rescale the model across a range of $\lambda$ values and use the optimal $\lambda$ as an indicator of data adequacy.
Currently, our study primarily focuses on the visual domain, where domain shifts are more readily quantifiable and interpretable. We hypothesize that similar phenomena may arise in language transformers; however, validating this would require careful experimental design, which we leave for future work. Additionally, while our analysis centers on standard LayerNorm, alternative normalization variants~\cite{xu2019understanding} may exhibit analogous behaviors, and architectures such as DiT~\cite{Peebles_2023_ICCV} or transformers without explicit normalization layers~\cite{zhu2025transformers} fall outside the scope of our current investigation. 
\textbf{Limitations.} 
Our study only focuses on classification with linear predictors to isolate LayerNorm effects; extensions to more complex tasks and models are discussed in \cref{app:Limitations}.


{
\small
\bibliographystyle{plain}
\bibliography{main.bib}

\begin{thebibliography}{10}

\bibitem{aresta2019bach}
Guilherme Aresta, Teresa Ara{\'u}jo, Scotty Kwok, Sai~Saketh Chennamsetty, Mohammed Safwan, Varghese Alex, Bahram Marami, Marcel Prastawa, Monica Chan, Michael Donovan, et~al.
\newblock Bach: Grand challenge on breast cancer histology images.
\newblock {\em Medical image analysis}, 56:122--139, 2019.

\bibitem{ba2016layer}
Jimmy~Lei Ba, Jamie~Ryan Kiros, and Geoffrey~E Hinton.
\newblock Layer normalization.
\newblock {\em arXiv preprint arXiv:1607.06450}, 2016.

\bibitem{brummer2023computational}
Otso Brummer, Petri P{\"o}l{\"o}nen, Satu Mustjoki, and Oscar Br{\"u}ck.
\newblock Computational textural mapping harmonises sampling variation and reveals multidimensional histopathological fingerprints.
\newblock {\em British Journal of Cancer}, 129(4):683--695, 2023.

\bibitem{chen2024efficiency}
Jiawei Chen, Dingkang Yang, Yue Jiang, Mingcheng Li, Jinjie Wei, Xiaolu Hou, and Lihua Zhang.
\newblock Efficiency in focus: Layernorm as a catalyst for fine-tuning medical visual language pre-trained models.
\newblock {\em arXiv preprint arXiv:2404.16385}, 2024.

\bibitem{cimpoi2014describing}
Mircea Cimpoi, Subhransu Maji, Iasonas Kokkinos, Sammy Mohamed, and Andrea Vedaldi.
\newblock Describing textures in the wild.
\newblock In {\em Proceedings of the IEEE conference on computer vision and pattern recognition}, pages 3606--3613, 2014.

\bibitem{de2023effectiveness}
Thomas De~Min, Massimiliano Mancini, Karteek Alahari, Xavier Alameda-Pineda, and Elisa Ricci.
\newblock On the effectiveness of layernorm tuning for continual learning in vision transformers.
\newblock In {\em 2023 IEEE/CVF International Conference on Computer Vision Workshops (ICCVW)}, pages 3577--3586. IEEE, 2023.

\bibitem{ding2024multimodal}
Tong Ding, Sophia~J Wagner, Andrew~H Song, Richard~J Chen, Ming~Y Lu, Andrew Zhang, Anurag~J Vaidya, Guillaume Jaume, Muhammad Shaban, Ahrong Kim, et~al.
\newblock Multimodal whole slide foundation model for pathology.
\newblock {\em arXiv preprint arXiv:2411.19666}, 2024.

\bibitem{dosovitskiy2020image}
Alexey Dosovitskiy, Lucas Beyer, Alexander Kolesnikov, Dirk Weissenborn, Xiaohua Zhai, Thomas Unterthiner, Mostafa Dehghani, Matthias Minderer, Georg Heigold, Sylvain Gelly, et~al.
\newblock An image is worth 16x16 words: Transformers for image recognition at scale.
\newblock {\em arXiv preprint arXiv:2010.11929}, 2020.

\bibitem{giannou2023expressive}
Angeliki Giannou, Shashank Rajput, and Dimitris Papailiopoulos.
\newblock The expressive power of tuning only the normalization layers.
\newblock {\em arXiv preprint arXiv:2302.07937}, 2023.

\bibitem{he2022masked}
Kaiming He, Xinlei Chen, Saining Xie, Yanghao Li, Piotr Doll{\'a}r, and Ross Girshick.
\newblock Masked autoencoders are scalable vision learners.
\newblock In {\em Proceedings of the IEEE/CVF conference on computer vision and pattern recognition}, pages 16000--16009, 2022.

\bibitem{houlsby2019parameter}
Neil Houlsby, Andrei Giurgiu, Stanislaw Jastrzebski, Bruna Morrone, Quentin De~Laroussilhe, Andrea Gesmundo, Mona Attariyan, and Sylvain Gelly.
\newblock Parameter-efficient transfer learning for nlp.
\newblock In {\em International conference on machine learning}, pages 2790--2799. PMLR, 2019.

\bibitem{hu2022lora}
Edward~J Hu, Yelong Shen, Phillip Wallis, Zeyuan Allen-Zhu, Yuanzhi Li, Shean Wang, Lu~Wang, Weizhu Chen, et~al.
\newblock Lora: Low-rank adaptation of large language models.
\newblock {\em ICLR}, 1(2):3, 2022.

\bibitem{ioffe2015batch}
Sergey Ioffe and Christian Szegedy.
\newblock Batch normalization: Accelerating deep network training by reducing internal covariate shift.
\newblock In {\em International conference on machine learning}, pages 448--456. pmlr, 2015.

\bibitem{jia2022visual}
Menglin Jia, Luming Tang, Bor-Chun Chen, Claire Cardie, Serge Belongie, Bharath Hariharan, and Ser-Nam Lim.
\newblock Visual prompt tuning.
\newblock In {\em European conference on computer vision}, pages 709--727. Springer, 2022.

\bibitem{jie2023fact}
Shibo Jie and Zhi-Hong Deng.
\newblock Fact: Factor-tuning for lightweight adaptation on vision transformer.
\newblock In {\em Proceedings of the AAAI conference on artificial intelligence}, volume~37, pages 1060--1068, 2023.

\bibitem{kumar2022fine}
Ananya Kumar, Aditi Raghunathan, Robbie Jones, Tengyu Ma, and Percy Liang.
\newblock Fine-tuning can distort pretrained features and underperform out-of-distribution.
\newblock {\em arXiv preprint arXiv:2202.10054}, 2022.

\bibitem{lu2024visual}
Ming~Y Lu, Bowen Chen, Drew~FK Williamson, Richard~J Chen, Ivy Liang, Tong Ding, Guillaume Jaume, Igor Odintsov, Long~Phi Le, Georg Gerber, et~al.
\newblock A visual-language foundation model for computational pathology.
\newblock {\em Nature Medicine}, 30(3):863--874, 2024.

\bibitem{oquab2023dinov2}
Maxime Oquab, Timothée Darcet, Theo Moutakanni, Huy~V. Vo, Marc Szafraniec, Vasil Khalidov, Pierre Fernandez, Daniel Haziza, Francisco Massa, Alaaeldin El-Nouby, Russell Howes, Po-Yao Huang, Hu~Xu, Vasu Sharma, Shang-Wen Li, Wojciech Galuba, Mike Rabbat, Mido Assran, Nicolas Ballas, Gabriel Synnaeve, Ishan Misra, Herve Jegou, Julien Mairal, Patrick Labatut, Armand Joulin, and Piotr Bojanowski.
\newblock Dinov2: Learning robust visual features without supervision, 2023.

\bibitem{Peebles_2023_ICCV}
William Peebles and Saining Xie.
\newblock Scalable diffusion models with transformers.
\newblock In {\em Proceedings of the IEEE/CVF International Conference on Computer Vision (ICCV)}, pages 4195--4205, October 2023.

\bibitem{peng2019moment}
Xingchao Peng, Qinxun Bai, Xide Xia, Zijun Huang, Kate Saenko, and Bo~Wang.
\newblock Moment matching for multi-source domain adaptation.
\newblock In {\em Proceedings of the IEEE/CVF international conference on computer vision}, pages 1406--1415, 2019.

\bibitem{qi2022parameter}
Wang Qi, Yu-Ping Ruan, Yuan Zuo, and Taihao Li.
\newblock Parameter-efficient tuning on layer normalization for pre-trained language models.
\newblock {\em arXiv preprint arXiv:2211.08682}, 2022.

\bibitem{radford2021learning}
Alec Radford, Jong~Wook Kim, Chris Hallacy, Aditya Ramesh, Gabriel Goh, Sandhini Agarwal, Girish Sastry, Amanda Askell, Pamela Mishkin, Jack Clark, et~al.
\newblock Learning transferable visual models from natural language supervision.
\newblock In {\em International conference on machine learning}, pages 8748--8763. PmLR, 2021.

\bibitem{sandler2023parameter}
Mark Sandler, Andrey Zhmoginov, Andrew~Gerald Howard, and Pramod~Kaushik Mudrakarta.
\newblock Parameter-efficient multi-task and transfer learning, June~13 2023.
\newblock US Patent 11,676,008.

\bibitem{sirinukunwattana2017gland}
Korsuk Sirinukunwattana, Josien~PW Pluim, Hao Chen, Xiaojuan Qi, Pheng-Ann Heng, Yun~Bo Guo, Li~Yang Wang, Bogdan~J Matuszewski, Elia Bruni, Urko Sanchez, et~al.
\newblock Gland segmentation in colon histology images: The glas challenge contest.
\newblock {\em Medical image analysis}, 35:489--502, 2017.

\bibitem{spanhol2015dataset}
Fabio~A Spanhol, Luiz~S Oliveira, Caroline Petitjean, and Laurent Heutte.
\newblock A dataset for breast cancer histopathological image classification.
\newblock {\em Ieee transactions on biomedical engineering}, 63(7):1455--1462, 2015.

\bibitem{ulyanov2016instance}
Dmitry Ulyanov, Andrea Vedaldi, and Victor Lempitsky.
\newblock Instance normalization: The missing ingredient for fast stylization.
\newblock {\em arXiv preprint arXiv:1607.08022}, 2016.

\bibitem{valizadehaslani2024layernorm}
Taha ValizadehAslani and Hualou Liang.
\newblock Layernorm: A key component in parameter-efficient fine-tuning.
\newblock {\em arXiv preprint arXiv:2403.20284}, 2024.

\bibitem{wang2024pathology}
Xiyue Wang, Junhan Zhao, Eliana Marostica, Wei Yuan, Jietian Jin, Jiayu Zhang, Ruijiang Li, Hongping Tang, Kanran Wang, Yu~Li, et~al.
\newblock A pathology foundation model for cancer diagnosis and prognosis prediction.
\newblock {\em Nature}, 634(8035):970--978, 2024.

\bibitem{wu2018group}
Yuxin Wu and Kaiming He.
\newblock Group normalization.
\newblock In {\em Proceedings of the European conference on computer vision (ECCV)}, pages 3--19, 2018.

\bibitem{xiao2010sun}
Jianxiong Xiao, James Hays, Krista~A Ehinger, Aude Oliva, and Antonio Torralba.
\newblock Sun database: Large-scale scene recognition from abbey to zoo.
\newblock In {\em 2010 IEEE computer society conference on computer vision and pattern recognition}, pages 3485--3492. IEEE, 2010.

\bibitem{xu2019understanding}
Jingjing Xu, Xu~Sun, Zhiyuan Zhang, Guangxiang Zhao, and Junyang Lin.
\newblock Understanding and improving layer normalization.
\newblock {\em Advances in neural information processing systems}, 32, 2019.

\bibitem{zaken2021bitfit}
Elad~Ben Zaken, Shauli Ravfogel, and Yoav Goldberg.
\newblock Bitfit: Simple parameter-efficient fine-tuning for transformer-based masked language-models.
\newblock {\em arXiv preprint arXiv:2106.10199}, 2021.

\bibitem{zhao2023tuning}
Bingchen Zhao, Haoqin Tu, Chen Wei, Jieru Mei, and Cihang Xie.
\newblock Tuning layernorm in attention: Towards efficient multi-modal llm finetuning.
\newblock {\em arXiv preprint arXiv:2312.11420}, 2023.

\bibitem{zhu2025transformers}
Jiachen Zhu, Xinlei Chen, Kaiming He, Yann LeCun, and Zhuang Liu.
\newblock Transformers without normalization.
\newblock {\em arXiv preprint arXiv:2503.10622}, 2025.

\end{thebibliography}
}

\appendix

\section{Limitations}
\label{app:Limitations}
Our study is limited to classification tasks, where the predictor $\mathcal{C}$ is linear, in order to isolate the effects of LayerNorm fine-tuning and avoid confounding factors introduced by more complex predictors. Extending our framework to other tasks, such as detection, segmentation, or multi-task learning, remains an important direction for future work. In multi-task scenarios, it is plausible that different LayerNorm scale parameters ($\gamma$) may require distinct rescaling directions, i.e., some increasing, others decreasing, depending on the specific sub-task characteristics. Moreover, our approach explicitly assumes that the source model $\mathcal{M}^{S}$ is well-trained and capable of extracting all necessary information from $X$ for predicting $Y$. If the target task involves novel label spaces or requires information not captured by $\mathcal{M}^{S}$, fine-tuning only the LayerNorm layers would likely be insufficient. Nevertheless, in such cases, combining LayerNorm tuning with additional mechanisms that enrich the model's feature representations could further enhance downstream performance.

\section{Social impact}
This work aims to deepen our understanding of LayerNorm fine-tuning in vision transformer foundation models, with a focus on its behavior under different data availability and distribution shift scenarios. By providing both empirical insights and theoretical considerations, the study supports the development of more efficient and effective adaptation strategies for large-scale models, potentially lowering the barriers to entry for research groups or practitioners with limited computational resources or labeled data. 
However, parameter-efficient adaptation methods also introduce potential risks. When models are fine-tuned with minimal supervision, it becomes harder to ensure robustness and fairness across unseen domains or subpopulations, particularly in high-stakes applications such as healthcare or surveillance. Our results highlight that the reliability of such tuning methods can vary significantly depending on data distribution and quantity, suggesting that blind adoption may lead to misleading performance estimations.

We believe that while LayerNorm fine-tuning holds promise for scalable and efficient model adaptation, its application in socially sensitive areas, such as medical diagnosis, surveillance, or automated decision-making, should be approached cautiously. We encourage future research to integrate stronger evaluation protocols and uncertainty estimation tools when applying lightweight fine-tuning in real-world deployments, particularly in sensitive or socially consequential contexts.

\section{Related work}
\label{sec:Related work}
\textbf{Studies of normalizations.}
Normalization layers have become integral components of deep learning architectures across both language and vision domains. The seminal work on Batch Normalization~\cite{ioffe2015batch} introduced normalization as a means to mitigate internal covariate shift, thereby improving training stability and convergence. Since then, various normalization techniques have been proposed, including Instance Normalization~\cite{ulyanov2016instance}, Group Normalization~\cite{wu2018group}, and Layer Normalization~\cite{ba2016layer}. Among these, LayerNorm has emerged as the standard choice in the design of large-scale models, particularly in large language models (LLMs) and visual transformer foundation models (ViTFs). In this work, we focus specifically on the role of LayerNorm within ViTFs.

\textbf{Parameter efficient
transfer learning.} 
Parameter-Efficient Transfer Learning.
Our work is closely related to parameter-efficient transfer learning, within which LayerNorm fine-tuning represents a specific subcase. Broadly, parameter-efficient methods either introduce additional lightweight trainable components, such as adapters~\cite{houlsby2019parameter} or prompt tokens~\cite{jia2022visual}, into a frozen ViTF, or selectively fine-tune a subset of the model’s existing parameters, such as all bias terms~\cite{zaken2021bitfit}. More recently, LoRA~\cite{hu2022lora} has demonstrated that optimizing low-rank decomposition matrices for weight updates in dense layers offers an effective and scalable approach to adapting large models.

\textbf{LayerNorm fine-tuning approaches.}
For LLMs,
\cite{zhao2023tuning} has demonstrated that LayerNorm fine-tuning is both effective and sufficient for multi-modal LLMs, outperforming widely used methods such as LoRA~\cite{hu2022lora}, a finding further supported by~\cite{chen2024efficiency}. Similarly, \cite{valizadehaslani2024layernorm} emphasizes the centrality of LayerNorm in parameter-efficient fine-tuning strategies. In the context of ViTFs, \cite{de2023effectiveness} shows that LayerNorm tuning is likewise efficient and proposes introducing auxiliary learnable parameters to capture LayerNorm shifts instead of modifying the original parameters directly. Despite these promising results, existing studies have not sufficiently examined the underlying mechanisms of LayerNorm during fine-tuning, nor have they explored how its effectiveness might be further enhanced.
 
\textbf{Visual transformer foundation models.}
Visual transformer foundation models (ViTFs) can broadly be categorized into two main branches. The first focuses on aligning multi-modal representations, particularly between text and images, as exemplified by models such as CLIP and OpenCLIP~\cite{radford2021learning}. These multi-modal ViTFs have extended their influence beyond natural images, gaining traction in medical imaging domains, like pathological images, through models like CONCH~\cite{lu2024visual}, CHEIF~\cite{wang2024pathology}, and TITAN~\cite{ding2024multimodal}. The second branch comprises models trained exclusively on visual modalities, such as DINOv2~\cite{oquab2023dinov2}, which leverage self-supervised learning frameworks. In particular, methods like Masked Autoencoders (MAE)~\cite{he2022masked} have demonstrated strong potential in advancing the training of ViTFs within this branch.

\section{Mathematical details}
\label{app:math_details}

\textbf{Proof details of \cref{prop:shift_compare}}
\red{To achieve that, we connect $\mathcal{LN}^{S}$ and $\mathcal{LN}^{T}$ by introducing definitions and assumptions on the loss function.}
The overall loss of a model on a given dataset $(X, Y)$, with LayerNorm parameters $\mathcal{LN}$, is defined as:
\begin{equation}
    L(\mathcal{LN}, X,Y) = \mathbb{E}_{(x,y) \sim (X,Y)} [\ell ( \mathcal{C}(\mathcal{M}(x)), y)],
\end{equation}
where $\mathcal{M}$ is the model with fixed parameters except for $\mathcal{LN}$, and $\mathcal{C}$ denotes the predictor.
Given an original model trained on the source dataset $(X^S, Y^S)$ and a fine-tuned model on the target dataset $(X^T, Y^T)$, we compare their losses to capture the relationship between $\mathcal{LN}^{S}$ and $\mathcal{LN}^{T}$ as follows:
\begin{equation}
\label{eq:loss_diff}
\begin{split}
    &\Delta L = L(\mathcal{LN}^{S}, X^{S}, Y^{S}) - L(\mathcal{LN}^{T}, X^{T}, Y^{T})  \\ 
   &= \underbrace{L\left(\mathcal{LN}^{S}, X^{S}, Y^{S}\right) - L\left(\mathcal{LN}^{S}, X^{T}, Y^{T}\right)}_{\text{Domain gap loss}} 
   + \underbrace{L\left(\mathcal{LN}^{S}, X^{T}, Y^{T}\right) - L\left(\mathcal{LN}^{T}, X^{T}, Y^{T}\right)}_{\text{Parameter adaptation gain}}.
\end{split}
\end{equation}
Assuming both models are well-optimized on their respective datasets, i.e., $\Delta L \approx 0$ since
$L(\mathcal{LN}^{S}, X^{S}, Y^{S}) \approx 0$, $L(\mathcal{LN}^{T}, X^{T}, Y^{T}) \approx 0$. This implies that:
\begin{equation}
    L\left(\mathcal{LN}^{S}, X^{T}, Y^{T}\right) - L\left(\mathcal{LN}^{T}, X^{T}, Y^{T}\right) 
    \approx L\left(\mathcal{LN}^{S}, X^{T}, Y^{T}\right) -L\left(\mathcal{LN}^{S}, X^{S}, Y^{S}\right)  .
\end{equation}
That is, the performance gain from adapting $\mathcal{LN}$  to the target domain is roughly equal to the distribution shift between source and target data \textbf{under other parameters in the $\mathcal{M}$ and $\mathcal{C}$ are fixed}.
Thus, we have \cref{prop:shift_compare}.

\textbf{Details analysis of \cref{sec:Rescaling LayerNorm shifts leads to improvements}.}
One crucial problem here is that the convergence behavior of the normalization parameters $\gamma$ and $\beta$ differs depending on the number of target training samples $n$. 

In Layer Normalization, $\beta$ and $\gamma$ are learnable affine parameters applied after normalization. During adaptation or estimation under domain shift, they are often initialized or updated based on batch statistics: $\beta$ can be associated with the estimated mean $\mu$, while $\gamma$ may be related to the estimated standard deviation $\sigma$. 

Assuming the target data samples $X_1, ..., X_n$ are i.i.d. with unknown mean $\mu$ and variance $\sigma^2$, we have the sample mean as $\bar{X}_n = \frac{1}{n} \sum_{i=1}^{n} X_i$.
By the Central Limit Theorem (CLT), for large enough $n$, we have  $\bar{X}_n \sim \mathcal{N}(\mu, \frac{\sigma^2}{n})$ and the standard error of the mean as $\mathrm{SE}(\bar{X}_n) = \frac{\sigma}{\sqrt{n}}$.

To ensure the $(1-\alpha)$ confidence interval of $\mu$ has width no more than $\varepsilon$:
\begin{equation}
   2 z_{\alpha/2} \cdot \frac{\sigma}{\sqrt{n}} \le \varepsilon, 
\end{equation}

where we require:
\begin{equation}
n \ge \left( \frac{2 z_{\alpha/2} \sigma}{\varepsilon} \right)^2.
\end{equation}

In contrast, estimating the variance $\sigma^2$ (as a proxy for $\gamma^2$ in LayerNorm) converges more slowly. Under the assumption that the data follows a Gaussian distribution, the sample variance $\hat{\sigma}^2$ satisfies:
\begin{equation}
\frac{(n - 1)\hat{\sigma}^2}{\sigma^2} \sim \chi^2_{n - 1}.
\end{equation}

Hence, the $(1 - \alpha)$ confidence interval for $\sigma^2$ is:
\begin{equation}
\left[
\frac{(n - 1)\hat{\sigma}^2}{\chi^2_{1 - \alpha/2}},\ 
\frac{(n - 1)\hat{\sigma}^2}{\chi^2_{\alpha/2}}
\right].
\end{equation}

To ensure the confidence interval width is within $2\varepsilon$, we solve:
\begin{equation}
\frac{(n - 1)\hat{\sigma}^2}{\chi^2_{\alpha/2}} - \frac{(n - 1)\hat{\sigma}^2}{\chi^2_{1 - \alpha/2}} \le 2\varepsilon.
\end{equation}

This yields:
\begin{equation}
n \ge 1 + \frac{\sigma^2}{\varepsilon} \cdot \frac{\Delta_\chi}{2}, 
\quad \text{where } \Delta_\chi = \frac{1}{\chi^2_{1 - \alpha/2}} - \frac{1}{\chi^2_{\alpha/2}}.
\end{equation}
It is worth noting that if the underlying data distribution deviates from Gaussian, a substantially larger sample size may be required to reliably estimate the variance—and hence stabilize $\gamma$.

\textbf{Proof details of \cref{prop:lp_bounded}.}
We want to highlight the credit of \cite{kumar2022fine} as most of the proof follows it, but several details are different. 

Since $Z^{T}X^{\top}  X {Z^{T}}^{\top}$ is invertible for most cases, there is a unique global minimum over a predictor's parameters $v$ to the loss optimized by linear-probing:
\begin{equation}
    \arg \min _v\left\|X {Z^{T}}^{\top} v-X {Z^{S}}^{\top} v^{S}\right\|_2^2=\left({Z^{T}} X^{\top} X {Z^{T}}^{\top}\right)^{-1} {Z^{T}} X^{\top} X {Z^{S}}^{\top} v^{S}.
\end{equation}
Assuming $\mathcal{Z}$ is a Reproducing Kernel Hilbert Space (RKHS),
the loss function thus is strongly convex in $v$ since the Hessian ${Z^{T}} X^{\top} X {Z^{T}}^{\top}$ is invertible. Then, the minima are unique for the gradient flow convergence in the RKHS for the parameters $v_{\mathrm{lp}}$ of $\mathcal{C}^{T'}$, where:
\begin{equation}
v_{\mathrm{lp}}^{\infty}=\left({Z^{T}} X^{\top} X {Z^{T}} ^{\top}\right)^{-1} {Z^{T}}  X^{\top} X{Z^{S}} ^{\top} v^{S},
\end{equation}
where the $v^{S}$ is the parameters of the unavailable $\mathcal{C}^{S}$.


We thus have the following definition:
\begin{equation}
    \begin{split}
\sqrt{{L\left(v_{\mathrm{lp}^{T'}}, Z^{T}\right)}} & =\left\|{Z^{S}}^{\top} v^{S}-{B^{T}}^{\top} v_{\mathrm{lp}}^{\infty}\right\|_2 \\
& \leq\left\|\left({Z^{S}}^{\top} v^{S}-{Z^{T}}^{\top} v^{S}\right)+\left({Z^{T}}^{\top} v^{S}-{Z^{T}}^{\top} v_{\mathrm{lp}}^{\infty}\right)\right\|_2 \\
& \leq \underbrace{\left\|{Z^{S}}^{\top} v^{S}-{Z^{T}}^{\top} v^{S}\right\|_2}_{(1)}+\underbrace{\left. \left\| {Z^{T}}^{\top} v^{S}-{Z^{T}}^{\top} v_{\mathrm{lp}}^{\infty}\right) \right\|_2}_{(2)}.
    \end{split}
\end{equation}

For term (1):
\begin{equation}
    \left\|B_{S}^{\top} v^{S}-{B^{T}}^{\top} v^{S}\right\|_2  \leq \sigma_{\max }\left(B_{S}-B^{T}\right)\left\|v^{S}\right\|_2 
     \leq \epsilon\left\|v^{S}\right\|_2 
     =\epsilon\left\|w_{\star}\right\|_2.
\end{equation}

Where we note that $\left\|v^{S}\right\|_2=\left\|w_{\star}\right\|_2$ and $w_{\star}=B_{S}^{\top} v^{S}$ where the rows of $B_{S}$ (columns of $B_{S}^{\top}$ ) are orthonormal.

Let $\Sigma=X^{\top} X$. For term (2), we first subtitute $v_{1 \mathrm{p}}^{\infty}$ and do some algebra (again noting that $\left\|v^{S}\right\|_2=\left\|w_{\star}\right\|_2$ ) to get:
\begin{equation}
\begin{split}
\left\|{B^{T}}^{\top} v^{S}-{B^{T}}^{\top} {v_{\mathrm{lp}}}^{\infty}\right\|_2 
& =\left\|{B^{T}}^{\top}\left({B^{T}} \Sigma {B^{T}}^{\top}\right)^{-1} {B^{T}} \Sigma\left({B^{T}}-B_{S}\right)^{\top} v^{S}\right\|_2 \\
& \leq \sigma_{\max }\left({B^{T}}^{\top}\left({B^{T}} \Sigma {B^{T}}^{\top}\right)^{-1} {B^{T}} \Sigma\right) \sigma_{\max }\left({B^{T}}-B_{S}\right)\left\|w_{\star}\right\|_2 \\
& \leq \frac{\sigma_{\max }\left({B^{T}}\right)^2 \sigma_{\max }(X)^2}{\sigma_{\min }\left(X {B^{T}}^{\top}\right)^2} \sigma_{\max }\left({B^{T}}-B_{S}\right)\left\|w_{\star}\right\|_2 \\
& \leq \frac{\sigma_{\max }\left({B^{T}}\right)^2 \sigma_{\max }(Z)^2}{\sigma_{\min }(Z)^2\left(\cos \theta_{\max }(R, S)\right)^2} \sigma_{\max }\left({B^{T}}-B_{S}\right)\left\|w_{\star}\right\|_2.
\end{split}
\end{equation}

Since $B^{T}$ has orthonormal rows, $\sigma_{\max }\left(B^{T}\right)=1$.$ \frac{\sigma_{\max }(Z)^2}{\sigma_{\min }(Z)^2} >=1$.
So it suffices to bound the quantities in the RKHS.

\section{Toy example details}
\label{app:Toy example details}

\textbf{Toy example training details.} 
To maintain simplicity, we employ the model with two MLPs and one LayerNorm layer in between. Following the standard fine-tuning process, our toy experiments also conduct training on the source data, fine-tuning on the target training set, and then testing on the rest of the target data. Specifically, the pretrianing stage trains all layers of the model; the fine-tuning stage only tunes the LayerNorm layer. After finetuning, we conduct the $\gamma$ rescaling on the tuned model. All examples are conducted for classification, and accuracy (acc) is used as the evaluation metric. Note that the reported accuracy results are in the range of $[0,1]$. 
After fine-tuning, each model’s LayerNorm $\gamma$ values are rescaled by $\lambda \in [0, 2]$, and the optimal value of $\lambda$ that leads to the best performance is chosen.
All toy examples run on the CPU. The default training seed is set to $42$.

\textbf{Toy data generation}
To simulate the source domain $X^{S}$, target training set $X^{T}$, and testing target sets $X^{T*}$ with class labels, we synthetically generate datasets based on Gaussian distributions with controlled variations in means and variances across multiple classes. Specifically, the source domain data are sampled from class-specific Gaussian distributions centered at fixed means with unit variance. Each distribution contributes an equal number of samples (i.e., $100$ samples per class), with labels assigned according to the corresponding Gaussian component.
To construct the target domain, we introduce domain shift by perturbing both the means and variances of the source distributions. Mean shifts are applied using predefined displacement vectors scaled by a \textit{mean shift scale} parameter, while variances are modified by scaling the original values using class-specific variance multipliers and a \textit{variance shift scale} factor. As a result, the target distributions exhibit both translational and scale shifts relative to their source counterparts, enabling controlled evaluation of model robustness under varying degrees of domain shift.
Similar to the source domain, $100$ labeled samples per class are generated for the target domain. A portion of these labeled target samples is used for training, while the remainder is reserved for testing.

We consider three settings for the number of classes: $2$, $4$, and $8$. In each setting, source samples are drawn from fixed Gaussian distributions, while target samples are drawn from the corresponding shifted distributions. The mean shift scale and variance shift scale parameters are varied independently within the range $[0, 2]$ in $11$ uniformly spaced steps to simulate increasing degrees of domain shift. For each configuration of the class count, shift parameters, and training conditions, the proportion of labeled target training data is varied among $[0.01, 0.05, 0.1, 0.3, 0.5]$, corresponding to $1, 5, 10, 30,$ and $50$ labeled samples per class to ensure that we cover one-shot, few-shot and normal fine-tuning scenarios. 
For each model, the shifts of $\gamma$ of LayerNorm are multiplied with the $\lambda$ that varies from $[0, 2]$ with $21$ uniformly spaced steps. 

\textbf{More results.}
The results of toy examples of different numbers of classes mentioned in the main paper are exhibited in \cref{fig:toy_res}.
Moreover, as shown in \cref{fig:toy_res_different_training_frac_vs_FSR}, the Fine-tuning Shift Ratio (FSR) decreases as the fraction of target samples used for training increases. In parallel, \cref{fig:toy_res_different_training_frac_vs_lambda} illustrates that the optimal value of $\lambda$ increases with larger training fractions. Notably, when the available target data is extremely limited (e.g., a fraction of 0.01, corresponding to one sample per class), the learned LayerNorm parameters $\gamma$ and $\beta$ become heavily biased by individual samples, causing the relationship between $\lambda$ and $FSR$ to become less reliable. Detailed visualization can be seen in \cref{fig:toy_res_different_training_frac}.


\begin{figure}[t]
    \centering
    \includegraphics[width=0.6\linewidth]{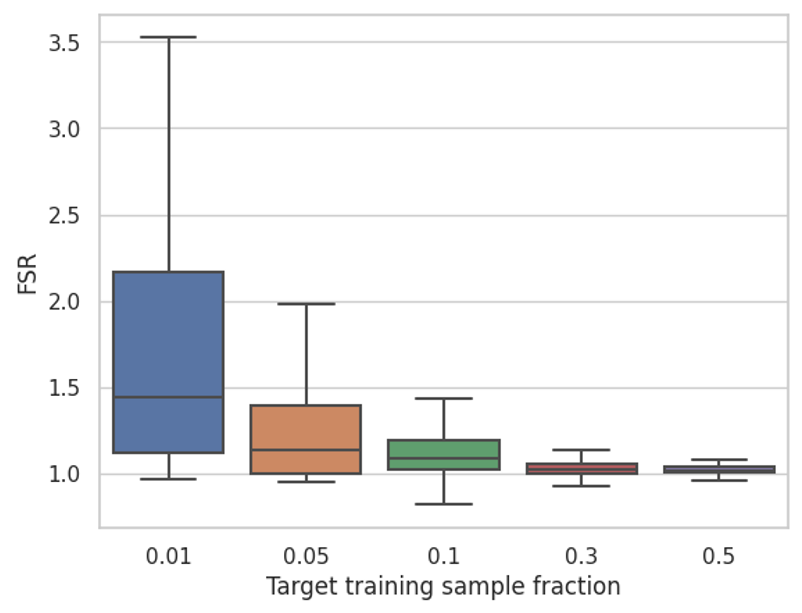}
    \caption{Toy example results: Fine-tuning Shift Ratio ($FSR$) against fractions of target samples used for training.
    }
    \label{fig:toy_res_different_training_frac_vs_FSR}
    \centering
    \includegraphics[width=0.6\linewidth]{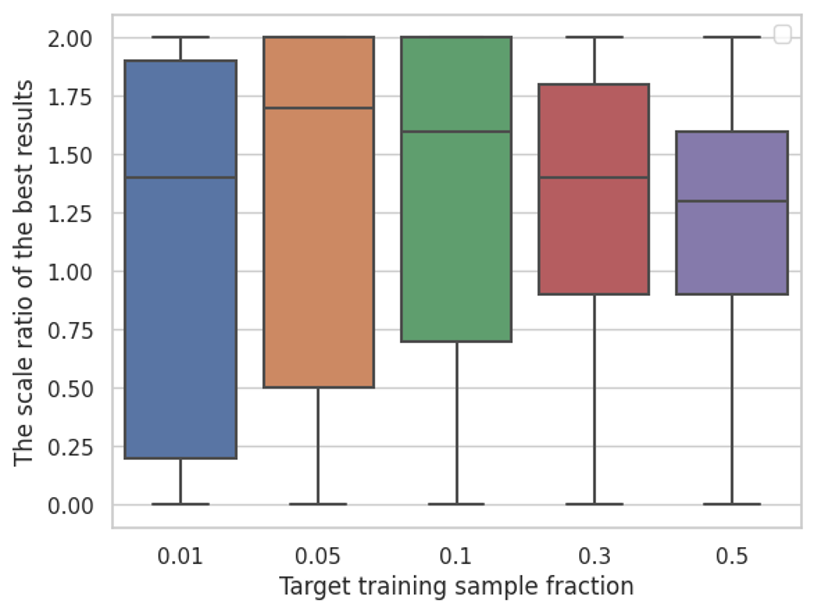}
    \caption{Toy example results: The scaling factor $\lambda$ against fractions of target samples used for training.
    }
    \label{fig:toy_res_different_training_frac_vs_lambda}
\end{figure}

\begin{figure}[t]
    \centering
    \includegraphics[width=0.7\linewidth]{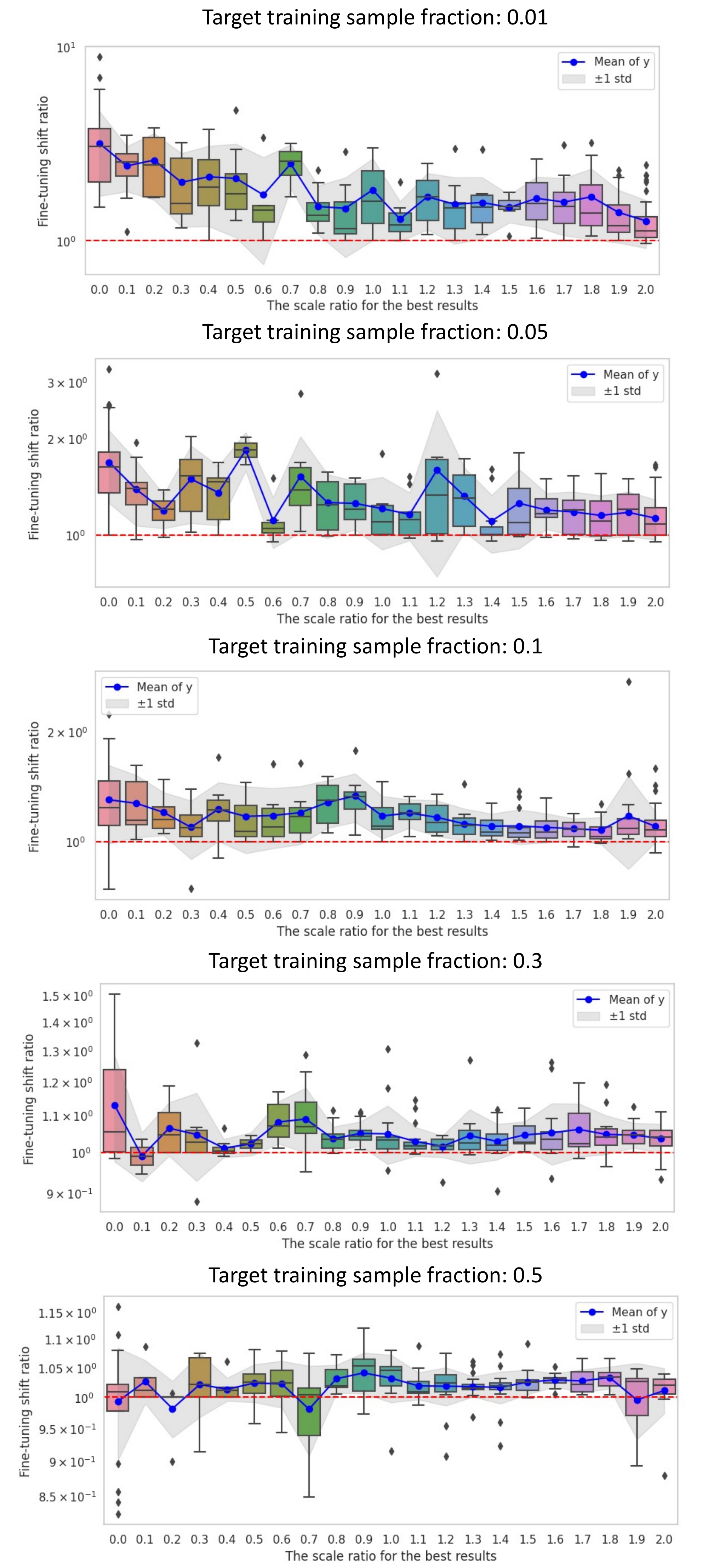}
    \caption{Toy example results: Statistical visualization of the relationship between the Fine-tuning Shift Ratio ($FSR$) and the scaling factor $\lambda$ that achieves the best performance, evaluated across varying fractions of target samples used for training.
    }
    \label{fig:toy_res_different_training_frac}
\end{figure}

\FloatBarrier
\section{More details about the proposed method}
\label{app:More details about the proposed method}

\subsection{Our method: More details and tricks}
To further enhance LayerNorm fine-tuning, we introduce two additional techniques: (i) increasing the feature dimension before prediction and (ii) applying a lightweight feature augmentation prior to attention pooling.
Expanding the feature dimension leads to a latent space where features become more separable, facilitating a more stable optimization process.
Additionally, we introduce a learnable scaling factor for the features before attention pooling, providing mild feature augmentation to promote better generalization.

\subsection{More results}
\textbf{Ablation study of proposed tricks.}
The results in \cref{fig:ablation_method_dataseed_4_conch_bach} illustrate the effects of each proposed trick.
Here, C1 denotes the baseline without feature dimension expansion, and C2 denotes doubling the feature dimension.
As the results indicate, our training pipeline already significantly boosts performance, and these two additional tricks provide further slight improvements.

\begin{figure}[ht]
    \centering
  \includegraphics[width=0.45\linewidth]{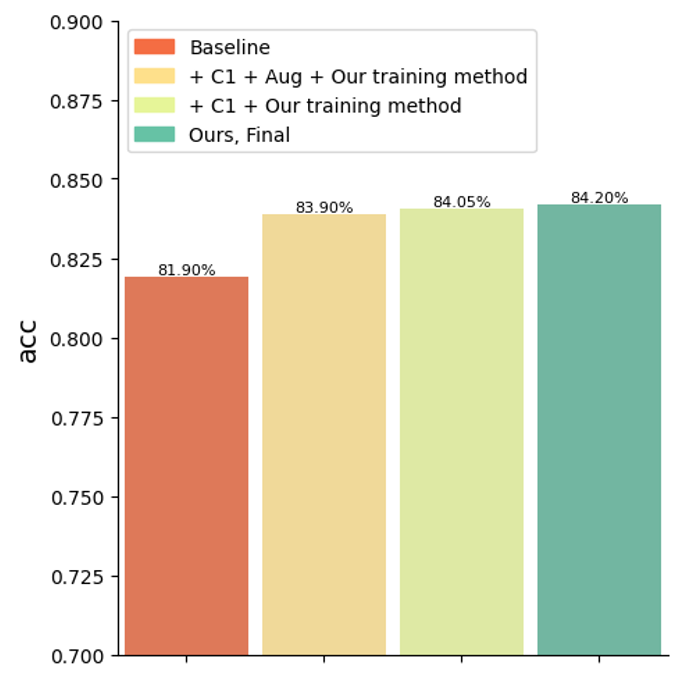}
    \caption{Ablation of proposed tricks of CONCH tuned on Bach. The results are averaged across various target training sample fractions. 
    }
  \label{fig:ablation_method_dataseed_4_conch_bach}
\end{figure}

\textbf{More comparisons of different LayerNorm fine-tuning approaches.} We offer more comparisons of different LayerNorm fine-tuning approaches in \cref{fig:path_ablation_titan}. It can be seen that our method yields the best or second-best results under most scenarios, bringing on average the best results among all competitors. 

\begin{figure}[ht]
    \centering
  \includegraphics[width=\linewidth]{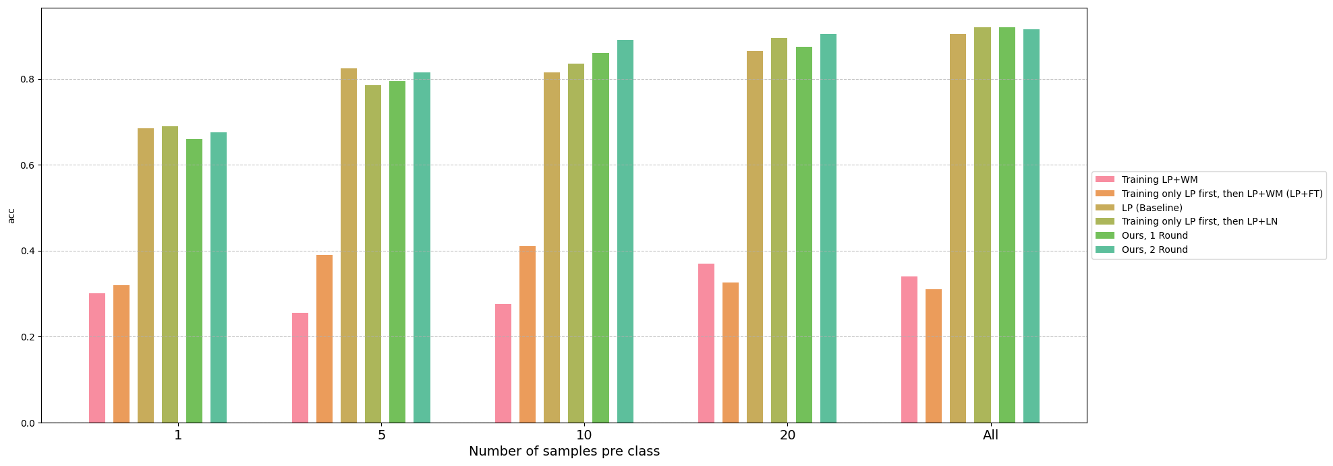}
    \caption{Our proposed method compared with various LayerNorm fine-tuning approaches. The used ViTF is TITAN, and the used dataset is Bach.   
    }
  \label{fig:path_ablation_titan}
\end{figure}

\FloatBarrier
\section{ID LayerNorm fine-tuning vs OOD LayerNorm fine-tuning: Experimental details and more results}
\label{app:Natural image experimental details}

\textbf{Experimental settings.}
For MAE, we pretrain the model by ourselves and then fine-tune it.
For the ID setting, the target domain is identical to the source domain, and fine-tuning is performed using labeled data from the same domain. In contrast, for the OOD setting, the target domains consist of the remaining domains not used during pretraining.  
To systematically analyze performance under varying data regimes, we fine-tune using randomly sampled $1\%$, $5\%$, $10\%$, and $50\%$ fractions of the labeled data from the target domain, while reserving the remaining data for evaluation. 

For large-scale pretrained CLIP and DINOv2, as it is difficult to determine whether it is ID or OOD settings, we tune them by using the subsets of overall domains, but test them across all domains. 
Here, we further test the large-scale pertaining ViTFs, OpenCLIP~\cite{radford2021learning} and DINOv2~\cite{oquab2023dinov2}, with LayerNorm fine-tuning on the domains of the DomainNet dataset used by  MAE, SUN~\cite{xiao2010sun}, DTD~\cite{cimpoi2014describing}, and Bach~\cite{aresta2019bach}. 
Like DomainNet for MAE, we also use randomly sampled $2\%$, $5\%$, $10\%$, and $100\%$ fractions of the labeled data from the target domain for both datasets and the official testing set for evaluation. Specifically, we use the official testing set for the performance evaluation.
$\lambda$ is searched in the range of $[0,2]$ and the best  $\lambda$ is chosen by the evaluation on the testing set. 
All experiments use one A100 GPU with 80GB of memory. 
The default training seed is set to $42$.

\subsection{Experimental details for DomainNet dataset}

\textbf{Pretraining MAE.} 
Following~\cite{he2022masked}, we adopt the self-supervised ViT-B/16 model as the backbone network.
The pretrianing stage uses the learning rate of $1\times10^{-4}$ for $500$ epochs. The mask ratio is set as $0.80$. {We adopted the vit\_base\_patch16\_224\_in21k as the initialization of the MAE.}

\textbf{Fine-tuning MAE.}
For fine-tuning, all approaches use the learning rate fixed at $5\times10^{-4}$, and all models are trained for a total of $100$ epochs. Our method performs each tuning round $20$ epochs and then switches to another round. The default batch size is $128$, but for extremely low data fractions (e.g., $0.01$), it is reduced to $12$ to account for the scarcity of training samples.

\textbf{Fine-tuning CLIP and DINOv2.}
For all experiments, the learning rate is fixed at $1\times10^{-3}$, and training is conducted for a total of $20$ epochs. The batch sizes are kept the same as those used in MAE fine-tuning. 
For our proposed method, the switch epoch is set to $2$.

\subsection{Experimental details for the SUN dataset}

\textbf{Fine-tuning CLIP and DINOv2.}
For all experiments, the learning rate is fixed at $5\times10^{-4}$ and weight decay is $1\times10^{-4}$, and training is conducted for a total of $20$ epochs. The batch sizes are $64$ for all experiments. Especially for our method on CLIP and DINOv2 fine-tuning, please refer to \cref{tab:CLIP_SUN_exp,tab:DINOv2_SUN_exp}.

\begin{table}[t]
    \caption{Experimental details for the SUN dataset: Our method for CLIP}
    \label{tab:CLIP_SUN_exp}
    \centering
    \begin{tabular}{c|cccc}
    \toprule
        Training fraction & Learning rate & Epochs & Switch epochs & Weight decay \\
        All& $5e-4$ &  10 & 5 & $1e-3$ \\
        $10\%$ & $5e-4$&  10 & 5 &$1e-5$ \\
        $5\%$ & $1e-3$&  10 & 2 &$1e-3$ \\
        $2\%$ & $3e-3$&  10 & 2 &$5e-3$ \\
    \bottomrule
    \end{tabular}
\end{table}

\begin{table}[t]
    \caption{Experimental details for the SUN dataset: Our method for DINOv2}
    \label{tab:DINOv2_SUN_exp}
    \centering
    \begin{tabular}{c|cccc}
    \toprule
        Training fraction & Learning rate & Epochs & Switch epochs & Weight decay \\
        All& $1e-4 $ &  20 & 5 & $1e-4$ \\
        $10\%$ & $2e-4$&  20 & 5 &$1e-4$ \\
        $5\%$ & $2e-4$&  20 & 5 &$1e-4$ \\
        $2\%$ & $2e-3$&  10 & 2 &$5e-3$ \\
    \bottomrule
    \end{tabular}
\end{table}

\subsection{Experimental details for the DTD dataset}

\textbf{Fine-tuning CLIP and DINOv2.}
For all experiments, the learning rate is fixed at $5\times10^{-4}$ and weight decay is $1\times10^{-4}$, and training is conducted for a total of $20$ epochs. The batch sizes are $64$ for all experiments. Especially for our method on CLIP and DINOv2 fine-tuning, please refer to \cref{tab:CLIP_DTD_exp,tab:DINOv2_DTD_exp}.

\begin{table}[t]
    \caption{Experimental details for the DTD dataset: Our method for CLIP}
    \label{tab:CLIP_DTD_exp}
    \centering
    \begin{tabular}{c|cccc}
    \toprule
        Training fraction & Learning rate & Epochs & Switch epochs & Weight decay \\
        All& $5e-4$ &  10 & 5 & $1e-3$ \\
        $10\%$ & $5e-4$&  10 & 5 &$1e-5$ \\
        $5\%$ & $1e-3$&  10 & 2 &$1e-3$ \\
        $2\%$ & $3e-3$&  10 & 2 &$5e-3$ \\
    \bottomrule
    \end{tabular}
\end{table}

\begin{table}[t]
    \caption{Experimental details for the SUN dataset: Our method for DINOv2}
    \label{tab:DINOv2_DTD_exp}
    \centering
    \begin{tabular}{c|cccc}
    \toprule
        Training fraction & Learning rate & Epochs & Switch epochs & Weight decay \\
        All& $1e-4 $ &  20 & 5 & $1e-6$ \\
        $10\%$ & $1e-3$&  20 & 5 &$1e-4$ \\
        $5\%$ & $1e-3$&  20 & 5 &$1e-4$ \\
        $2\%$ & $1e-3$&  10 & 2 &$5e-3$ \\
    \bottomrule
    \end{tabular}
\end{table}

\subsection{Experimental details for the Bach dataset}
\textbf{Fine-tuning CLIP and DINOv2.}
The experimental details are the same as the pathological ViTFs.

\subsection{More results}

\textbf{Applying $\lambda$ to $\gamma$ is more stable than $\beta$.} Please refer to the \cref{fig:natural_image_bias_weight}.

\begin{figure}[h]
    \centering
    \includegraphics[width=\linewidth]{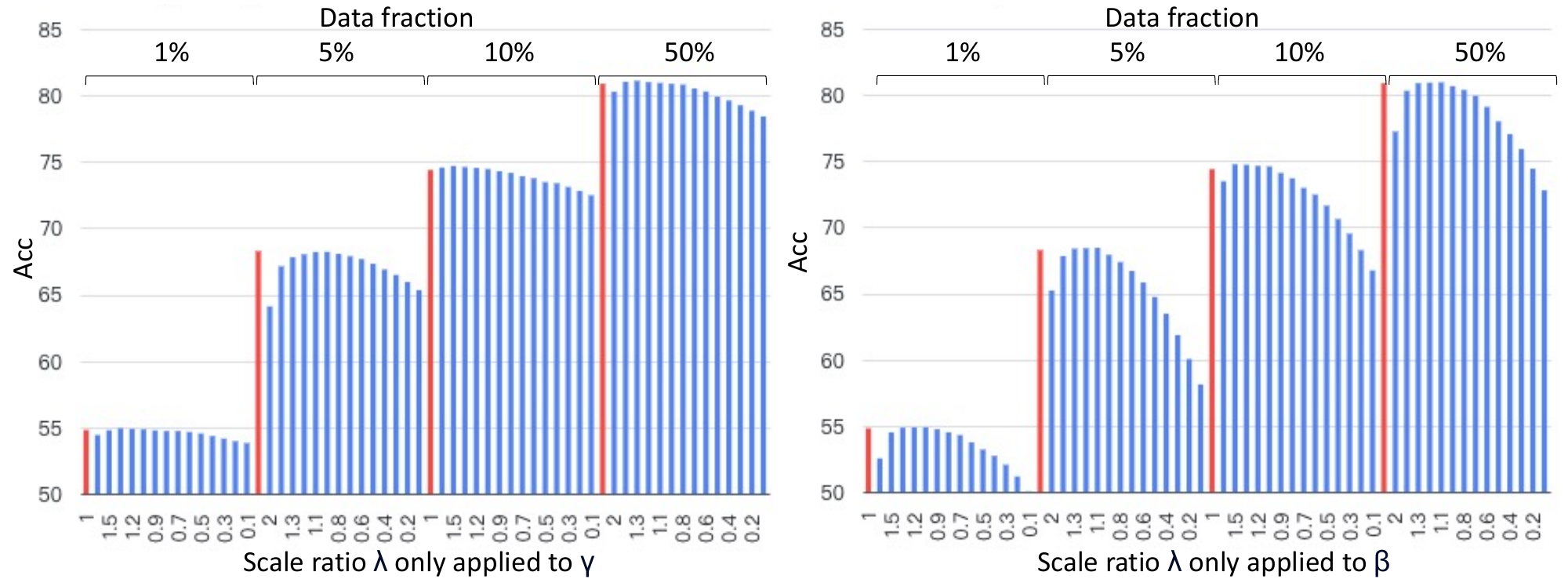}
    \caption{Comparison of applying $\lambda$ to $\gamma$ or $\beta$. The model adopted here corresponds to the ID:A to A experiments from \cref{tab:ID_OOD_res}.
    Though the tendencies are similar, it can be seen that the $\beta$ is way more sensitive to the value of $\lambda$ than $\gamma$. This reflects the analysis in \cref{sec:Rescaling LayerNorm shifts leads to improvements}. Therefore, we only apply the $\lambda$ to $\gamma$.
    }
    \label{fig:natural_image_bias_weight}
\end{figure}

\FloatBarrier
\section{LayerNorm fine-tuning for Pathology images ViTFs: Experimental details and more results}
\label{app:LayerNorm fine-tuning for Pathology images ViTFs: Experimental details and more results}

\subsection{Experimental details}
For a fair comparison, we train all ViTFs using the seed $1$ as the default. All experiments across all datasets and ViTFs are using $0.001$ as the learning rate with $100$ epochs. Especially for our method, we set the epoch of each round as $20$, i.e., switching the training stage at every $20$ rounds. All experiments use one A100 GPU with 80GB of memory. 
The default training seed is set to $42$.

\subsection{More results}

More results of pathology ViTFs can be found in \cref{fig:path_res_details,fig:path_res_details_each_fraction}.
More visualizations of the LayerNorm shifts can be seen in \cref{fig:path_norm_vis_detailed}.

We provide boxplots of our proposed method across various seeds in \cref{fig:path_different_seeds}. It can be seen that our method yields stable improvements with relatively smaller variances across all settings. This indicates the robustness of our proposed method.

\begin{figure}[ht]
    \centering
  \includegraphics[width=\linewidth]{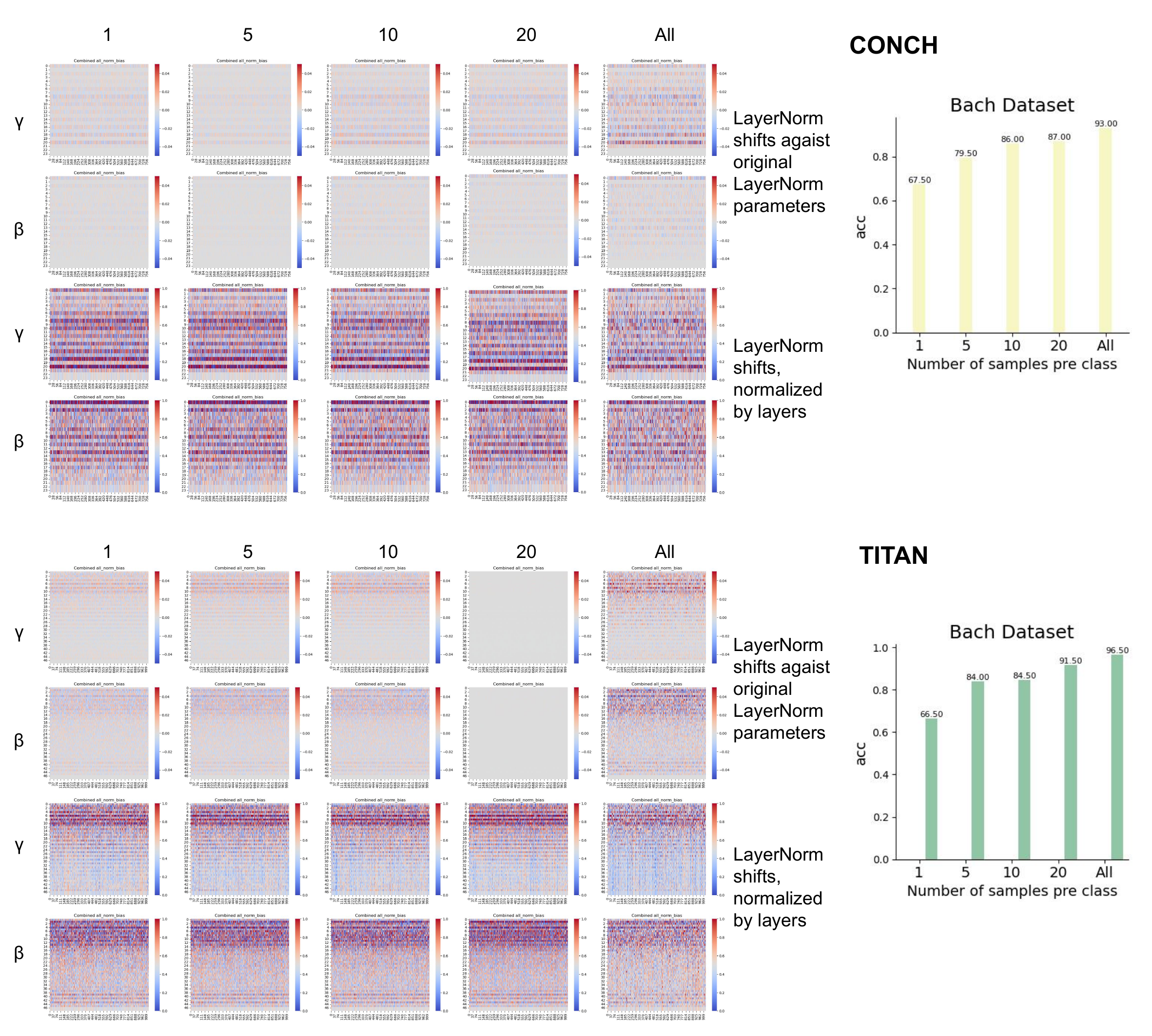}
    \caption{Visualizations of LayerNorm shifts on the Bach dataset for each setting. 
    }
  \label{fig:path_norm_vis_detailed}
\end{figure}

\begin{figure}[ht]
    \centering
  \includegraphics[width=\linewidth]{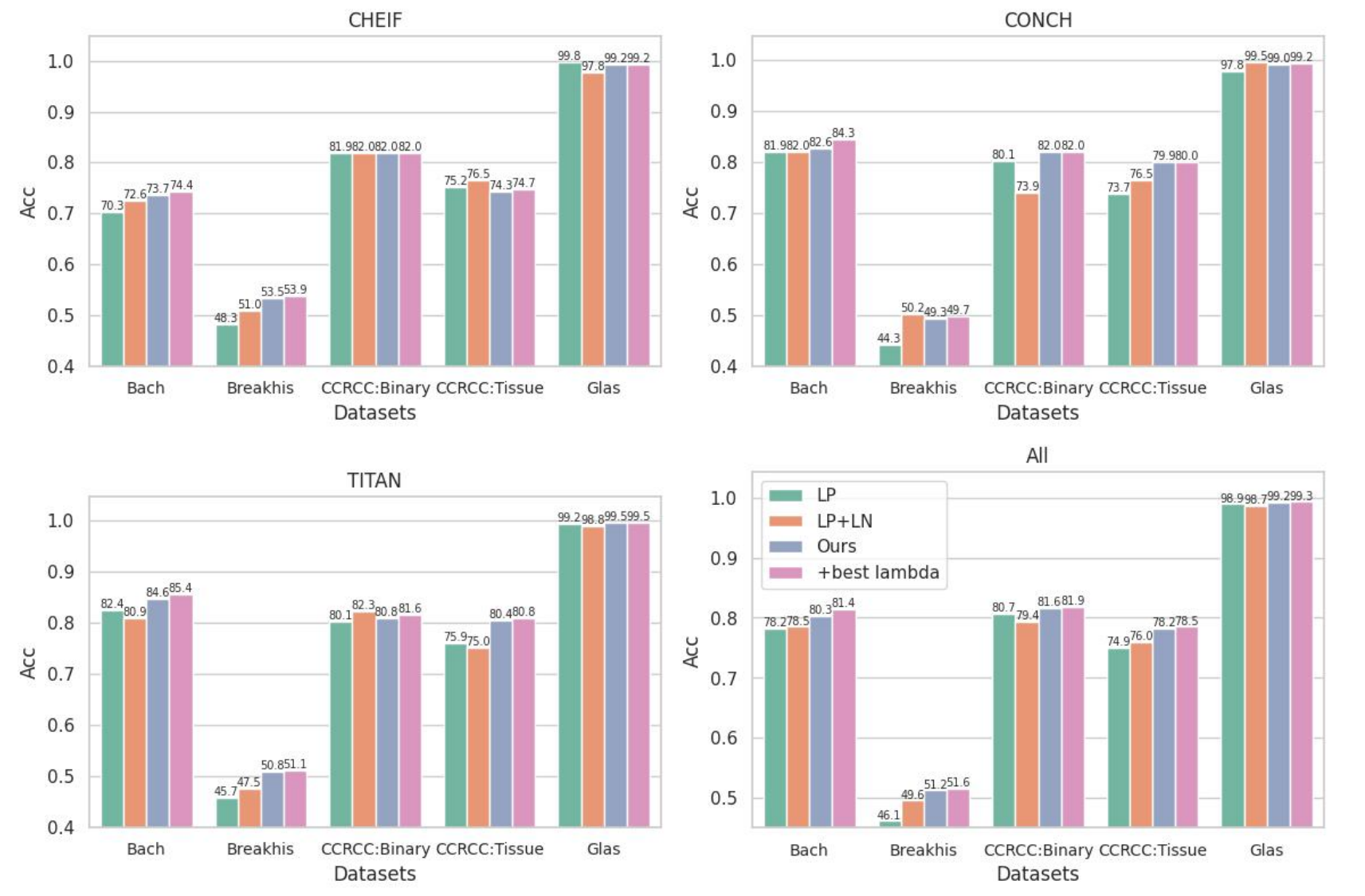}
    \caption{Results of fine-tuning pathological foundation models across datasets. Note here that all results are averaged across different numbers of target training samples per class. 
    }
  \label{fig:path_res_details}
\end{figure}

\begin{figure}[ht]
    \centering
    \includegraphics[width=\linewidth]{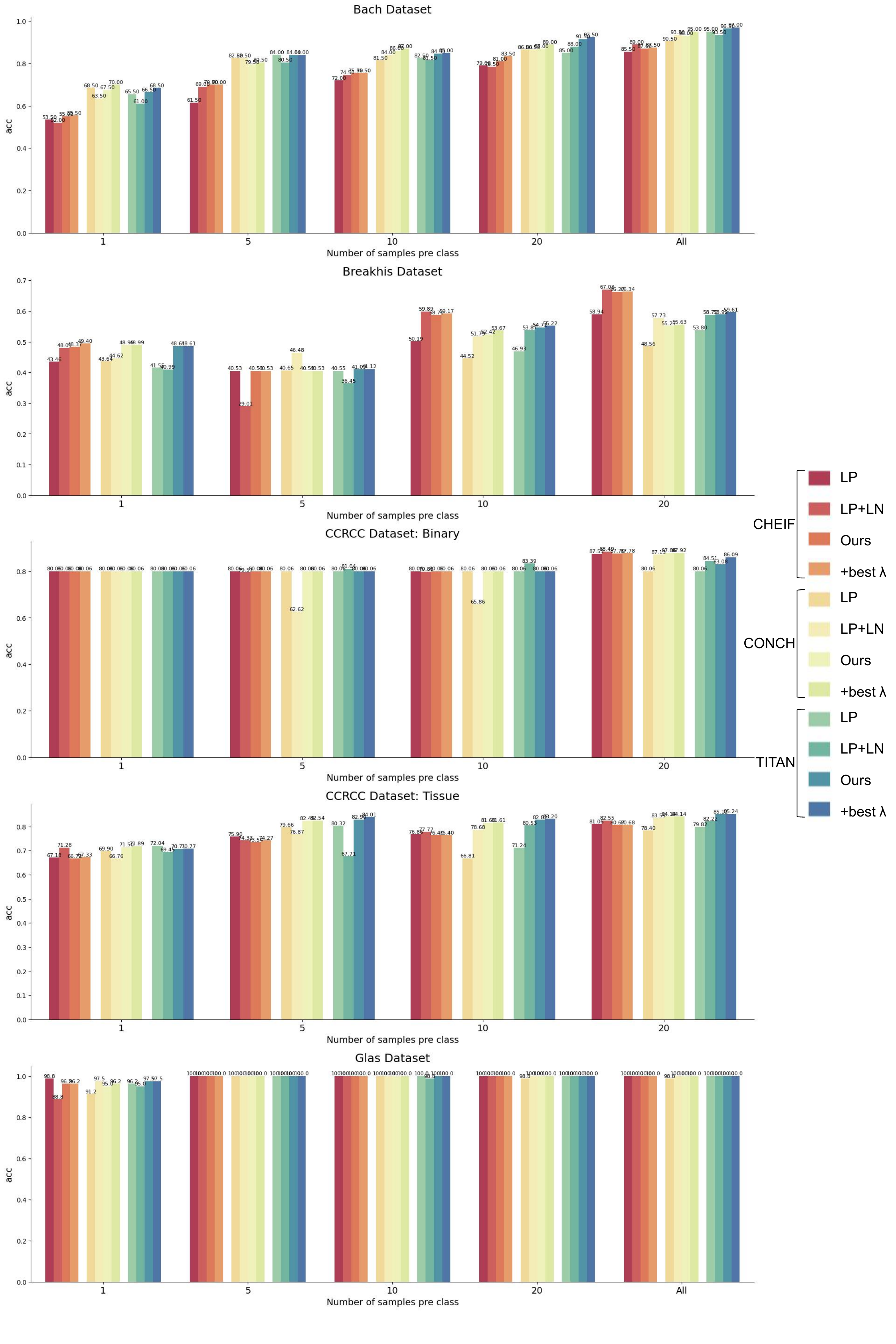}
    \caption{Results of each pathological dataset against different numbers of target training data for each class.
    }
    \label{fig:path_res_details_each_fraction}
\end{figure}

\begin{figure}[h]
    \centering
  \includegraphics[width=\linewidth]{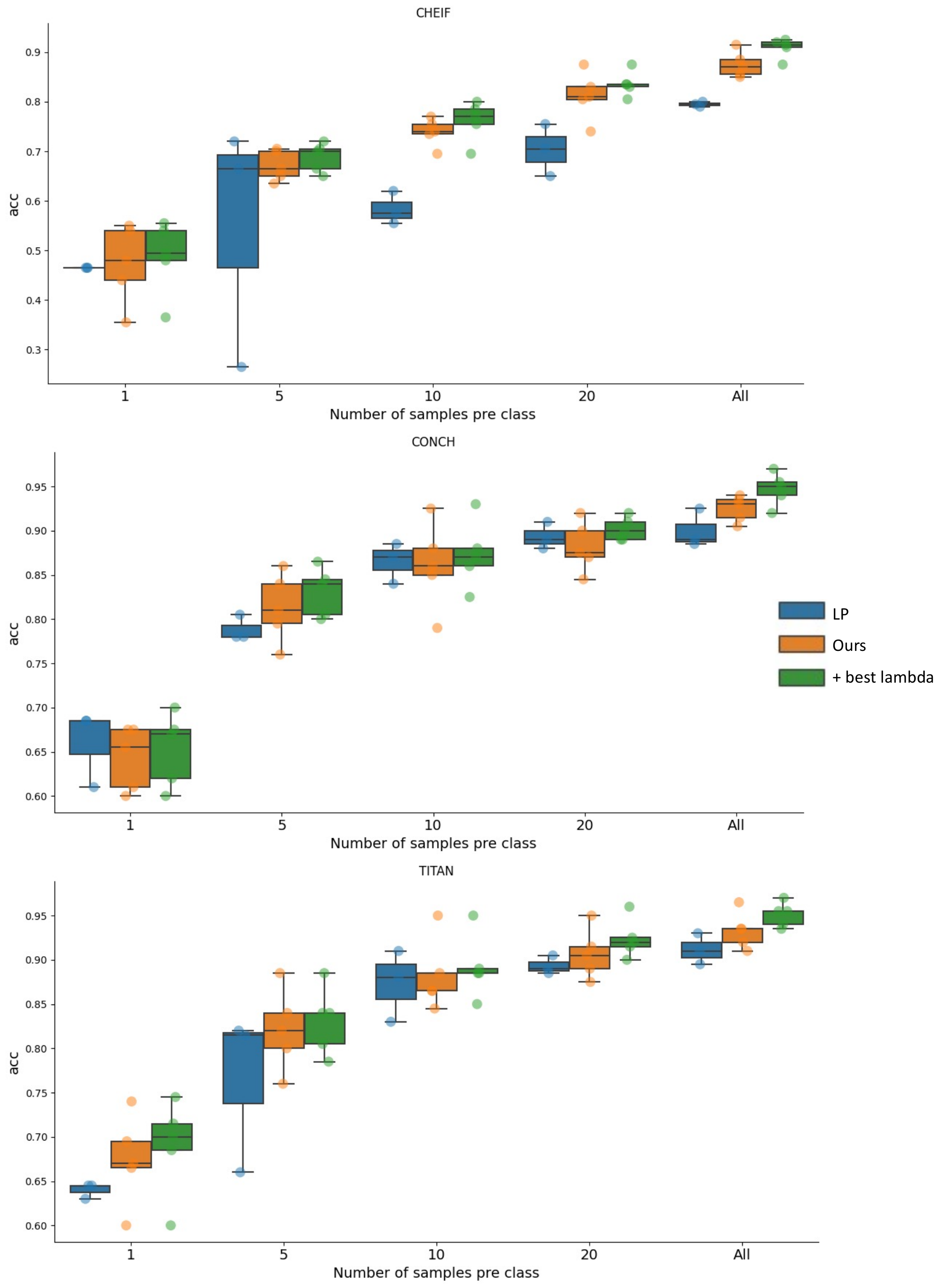}
    \caption{Our proposed method across various seeds for data splits in comparison to LP. The seeds used here are $[1,2,3,4,42]$.
    }
  \label{fig:path_different_seeds}
\end{figure}

\FloatBarrier
\subsection{Attempts to sparsify the LayerNorm shifts under limited target training samples}
\label{app:Attempts for sparsing the LayerNorm shifts while there are fewer target training samples}


As observed in \cref{sec:path_exp}, increasing the number of tuning samples generally results in sparser LayerNorm updates and improved performance. 
We have attempted to achieve sparsity under limited target training samples to enhance performance further. 
However, due to the randomness in the training process and the selection of target samples, achieving an appropriate level of sparsity seems to be non-trivial. 
We present our attempts here, hoping that these results may benefit those interested in further exploring this direction.

\textbf{Baseline.} We use the LP results as the baseline.

\textbf{SVD Keep first for gamma only.} 
All shifts in $\gamma$ parameters are concatenated into a single matrix, with columns corresponding to feature dimensions and rows corresponding to different layers. Singular Value Decomposition (SVD) is then applied to this matrix, and the top-$k$ singular components are retained. The reported results correspond to the optimal choice of $k$ selected based on validation performance.

\textbf{SVD Keep last for both gamma and beta.} All shifts in $\gamma$ and $\beta$ parameters are concatenated into a single matrix correspondingly, with columns corresponding to feature dimensions and rows corresponding to different layers. Singular Value Decomposition (SVD) is then applied to both matrices, and the last-$k$ singular components are retained. The reported results correspond to the optimal choice of $k$ selected based on validation performance.

\textbf{SVD Keep middle for both gamma and beta.}
All shifts in $\gamma$ and $\beta$ parameters are concatenated into a single matrix correspondingly, with columns corresponding to feature dimensions and rows corresponding to different layers. Singular Value Decomposition (SVD) is then applied to both matrices, and the middle-$k$ singular components are retained. The reported results correspond to the optimal choice of $k$ selected based on validation performance.

\textbf{SVD Keep first for beta only.}
All shifts in $\beta$ parameters are concatenated into a single matrix, with columns corresponding to feature dimensions and rows corresponding to different layers. Singular Value Decomposition (SVD) is then applied to this matrix, and the top-$k$ singular components are retained. The reported results correspond to the optimal choice of $k$ selected based on validation performance.

\textbf{SVD Keep first for both gamma and beta.} 
All shifts in $\gamma$ and $\beta$ parameters are concatenated into a single matrix correspondingly, with columns corresponding to feature dimensions and rows corresponding to different layers. Singular Value Decomposition (SVD) is then applied to both matrices, and the first-$k$ singular components are retained. The reported results correspond to the optimal choice of $k$ selected based on validation performance.

\textbf{Random drop for gamma only.} Randomly drop shifts in $\gamma$ by ratios. The reported results correspond to the optimal choice of the drop ratios selected based on validation performance.

\textbf{Random drop for beta only.}  Randomly drop shifts in $\beta$ by ratios. The reported results correspond to the optimal choice of the drop ratios selected based on validation performance.

\textbf{Only scale beta.} Rescaling shifts in $\beta$. The reported results correspond to the optimal choice of the rescale ratios selected based on validation performance.

\textbf{Only scale gamma (Ours).} Rescaling shifts in $\gamma$. The reported results correspond to the optimal choice of the rescale ratios selected based on validation performance.

\begin{figure}[ht]
    \centering
  \includegraphics[width=\linewidth]{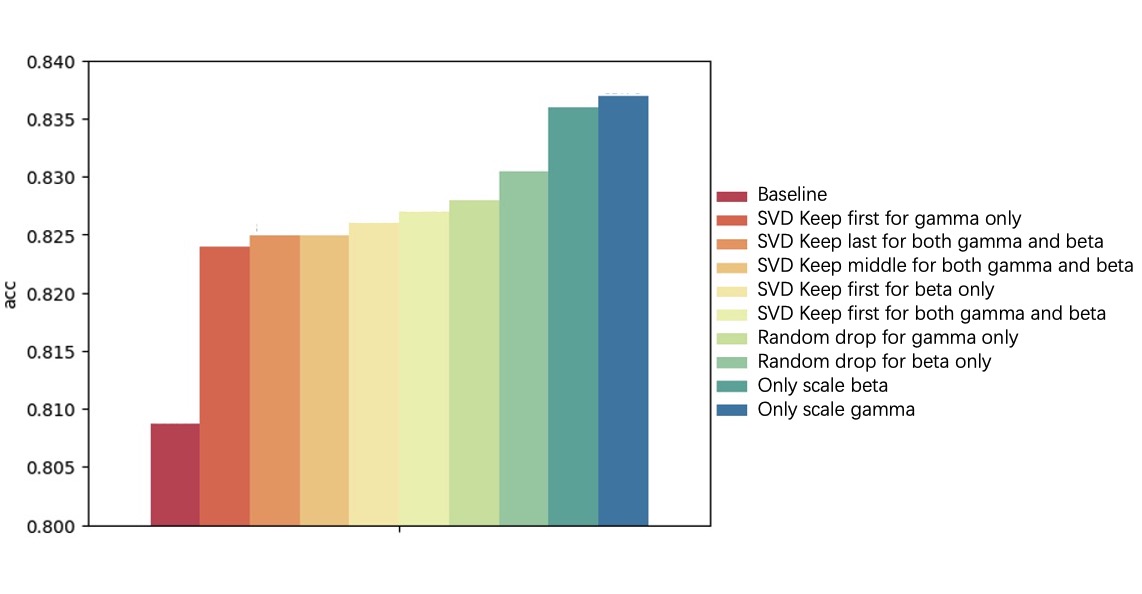}
    \caption{Different LayerNorm sparsity attempts.  The used dataset is Bach. The results are averaged across all factions and ViTFs. 
    }
  \label{fig:different_sparse}
\end{figure}

\textbf{Result.} As shown in \cref{fig:different_sparse}, Only scale gamma (Ours) achieves the best results. Moreover, we find that the SVD-based approaches often fail to bring improvements when there are more training samples (e.g., $20$ per class) and may not be able to be effective for all modes (e.g., CHEIF).  
Thus, while sparsifying LayerNorm updates in low-data regimes may offer potential benefits, it remains a non-trivial challenge due to the complex, model-specific nature of these adaptations and the inherent uncertainty associated with the selected target training samples.

\clearpage

\end{document}